\documentclass{article} %
\usepackage{iclr2022_conference,times}

\usepackage{amsmath,amsfonts,bm}

\def\eqref#1{equation~\ref{#1}}

\def\1{\bm{1}}

\def\vx{{\bm{x}}}

\def\mA{{\bm{A}}}

\def\mM{{\bm{M}}}

\def\mT{{\bm{T}}}

\def\mW{{\bm{W}}}

\def\mZ{{\bm{Z}}}

\DeclareMathAlphabet{\mathsfit}{\encodingdefault}{\sfdefault}{m}{sl}
\SetMathAlphabet{\mathsfit}{bold}{\encodingdefault}{\sfdefault}{bx}{n}

\def\gG{{\mathcal{G}}}

\def\sF{{\mathbb{F}}}
\def\sG{{\mathbb{G}}}
\def\sH{{\mathbb{H}}}

\def\sM{{\mathbb{M}}}

\def\sS{{\mathbb{S}}}

\def\sV{{\mathbb{V}}}

\def\sX{{\mathbb{X}}}
\def\sY{{\mathbb{Y}}}

\newcommand{\normlzero}{L^0}

\newcommand{\normltwo}{L^2}
\newcommand{\normlp}{L^p}

\usepackage[utf8]{inputenc} %
\usepackage{hyperref}
\usepackage{url}
\usepackage{graphicx}
\usepackage{pgfplots}
\usepackage{subfig}
\usepackage{multirow}       %
\usepackage{makecell}       %
\usepackage{wrapfig}        %
\usepackage{float}          %
\usepackage{nicefrac}       %
\usepackage{booktabs}
\usepackage[ruled, vlined, algosection]{algorithm2e}
\usepackage{enumitem}       %
\usepackage[symbol]{footmisc}

\makeatletter
\renewcommand{\@algocf@capt@plain}{above}%
\def\@algocf@pre@ruled{}
\makeatother

\title{\resizebox{\linewidth}{\height}{Generalization of Neural Combinatorial Solvers} \resizebox{0.97\linewidth}{\height}{Through the Lens of Adversarial Robustness}}

\author{Simon Geisler$^1$\thanks{equal contribution}~, ~Johanna Sommer$^{1 \fnsymbol{footnote}}$, ~Jan Schuchardt$^1$, \AND Aleksandar Bojchevski$^2$, ~Stephan Günnemann$^1$\\
\resizebox{\linewidth}{\height}{\texttt{\{geisler, sommer, schuchaj, guennemann\}@in.tum.de |   bojchevski@cispa.de}} \\
\textsuperscript{1}Department of Informatics \& Munich Data Science Institute,  Technical University of Munich \\
\textsuperscript{2}CISPA Helmholtz Center for Information Security \\
}

\newcommand{\perturbationmodel}{G}
\newcommand{\feat}{\vx}
\newcommand{\featset}{\sX}
\newcommand{\lab}{y}
\newcommand{\solution}{Y}
\newcommand{\labset}{\sY}
\newcommand{\pertfeat}{\tilde{\feat}}
\newcommand{\pertlab}{\tilde{\lab}}
\newcommand{\pertsolution}{\tilde{\solution}}
\newcommand{\st}{\quad\text{s.t.}\quad}
\newcommand{\params}{\theta}
\newcommand{\model}{f_\params}
\newcommand{\loss}{\ell}
\newcommand{\dist}{\omega}
\newcommand{\tspsolution}{\sigma}

\newtheorem{proposition}{Proposition}

\newtheorem{corollary}{Corollary}

\newenvironment{proof}{}{$\square$}

\iclrfinalcopy %
\begin{document}

\maketitle

\begin{abstract}
End-to-end (geometric) deep learning has seen first successes in approximating the solution of combinatorial optimization problems. However, generating data in the realm of NP-hard/-complete tasks brings practical and theoretical challenges, resulting in evaluation protocols that are too optimistic. Specifically, most datasets only capture a simpler subproblem and likely suffer from spurious features. We investigate these effects by studying adversarial robustness--a local generalization property--to reveal hard, model-specific instances and spurious features. For this purpose, we derive perturbation models for SAT and TSP. Unlike in other applications, where perturbation models are designed around subjective notions of imperceptibility, our perturbation models are \emph{efficient and sound}, allowing us to determine the true label of perturbed samples without a solver. Surprisingly, with such perturbations, a sufficiently expressive neural solver does not suffer from the limitations of the accuracy-robustness trade-off common in supervised learning. Although such robust solvers exist, we show empirically that the assessed neural solvers do not generalize well w.r.t.\ small perturbations of the problem instance.

\end{abstract}

\section{Introduction}\label{sec:introduction}

Combinatorial Optimization covers some of the most studied computational problems. Well-known examples are the NP-complete SATisfiability problem for boolean statements or the Traveling Salesperson Problem (TSP).
These problems can be solved efficiently with approximate solvers that have been crafted over the previous decades~\citep{festa_brief_2014}. As an alternative to
engineered,
application-specific heuristics, learning seems to be a good candidate \citep{bengio_machine_2021}
\begin{wrapfigure}[17]{r}{0.5\textwidth}
  \centering
  \vspace{-13pt}
  \resizebox{0.95\linewidth}{!}{
  \includegraphics[page=1,width=\linewidth]{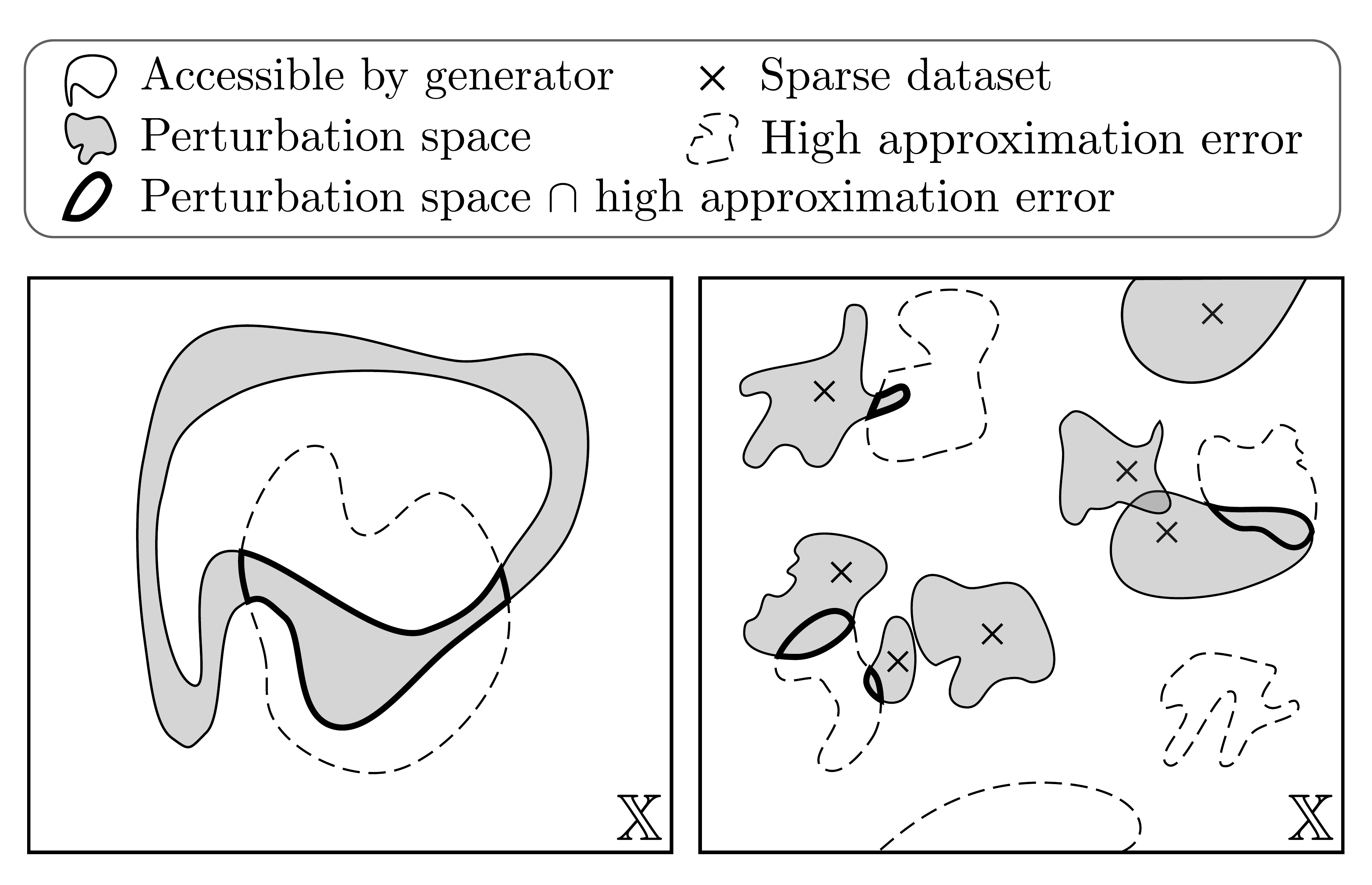}
  }
  \caption{Adversarial examples can enhance the coverage of problem space \(\featset\) and find model-specific regions with difficult examples. Left, shows the asymptotic coverage of an efficient, incomplete data generator \citep{yehuda_its_2020}. Right, shows the important case (\(\featset\) is large) of a sparse sample of a possibly complete generator.
  \label{fig:sets_schematic}}
\end{wrapfigure}
and was studied as a component in traditional solvers
(e.g.\ \citet{haim_restart_2009}).
Despite deep learning for combinatorial optimization gaining attention recently, it is still an open question if and to what extent deep learning can effectively approximate NP-hard problems.

Moreover, there is a ``Catch-22``; even if neural networks could solve NP-hard problems, generating the training data is either (a) \emph{incomplete but efficient} or (b) \emph{complete but inefficient} (or approximate). An \emph{incomplete} data generator (a) crafts the problem instances s.t.\ their labels are known and a \emph{complete} data generator (b) %
obtains the labels via (approximately) solving the random instances.
Additionally,
a \emph{dense sample} is intractable even for moderate problem sizes due to the large problem space \(\featset\).

For the special case of (a) exact polynomial-time data-generators, \citet{yehuda_its_2020} detail two challenges as a result of the NP-hardness: \emph{(1) Easier subproblem:} a dataset from an efficient data generator only captures a strictly easier subproblem. \emph{(2) Spurious features:} due to the lack of completeness, the resulting dataset can be trivially solvable due to superficial/spurious features. These findings extrapolate to case (b) for a sufficiently sparse sample (i.e.\ data could have been generated by an efficient data generator). Thus, it is concerning that often the same (potentially flawed) data generator is used for training \emph{and evaluation}. 

Adversarial robustness is a challenging local generalization property that offers a way of fixing the too optimistic model performance estimates that are the result of evaluations on incomplete datasets.
We illustrate this in \autoref{fig:sets_schematic} for two relevant data generation schemes: (a) to the left we discuss an \emph{incomplete, efficient} data generator and (b) to the right we discuss a sparse sample of a potentially \emph{complete} data generator. With a suitable choice of perturbation model, we (a) possibly extend the reachable space by an efficient data generator or (b) cover the space around each instance in the sparse dataset. In other words, such adversarial attacks aim to find the intersection of the uncovered regions and hard, model-specific samples. Therefore, it is possible to detect the described defects. Adversarial robustness more realistically evaluates a model's generalization ability instead of simply testing on the \emph{same} data generation procedure or a \emph{sparse} external dataset.

Adversarial robustness is a desirable property for neural combinatorial optimization, since, in contrast to general learning tasks, in combinatorial optimization we do not have an accuracy robustness trade-off in the sense of ~\citet{suggala_revisiting_2019}. %
This means, there exists a model with high accuracy and high robustness. One key factor to avoid the accuracy robustness trade-off is choosing a perturbation model that guarantees the correct label of the perturbed sample (we call them \emph{sound}). This is in stark contrast to other domains where one relies on imperceptible perturbations.

We instantiate our adversarial attack framework for the NP-complete SAT and TSP. Nevertheless, most of the principles can be transferred to other combinatorial optimization problems. Note that often such problems can be even reduced onto one another, as is the case for e.g.\ SAT and the maximum independent set. Having said this, we select SAT because of its general applicability and the notoriously challenging TSP due to its practical importance (e.g.\ for supply chain optimization). We then use our attacks to show that the evaluated neural SAT and TSP solvers are highly non-robust. 

\textbf{Contributions.} \textbf{(1)} We bring the study of adversarial robustness to the field of neural combinatorial solvers to tackle fundamental problems in the evaluation of neural solvers. \textbf{(2)} We propose perturbation models for SAT and TSP s.t.\ we efficiently determine the updated solution. \textbf{(3)} We show that the models for SAT and TSP can be easily fooled with small perturbations of the problem instance and \textbf{(4)} that adversarial training can improve the robustness and generalization.

\section{Background on Neural Solvers}\label{sec:preliminaries}

Intuitively, combinatorial optimization is the task of finding an optimal element from the finite set of possible solutions (e.g.\ the truth assignment for a boolean statement). We formalize this as \(\solution = \arg\min_{\solution' \in g(\feat)} c(\feat, \solution')\) where \(\feat\) is a problem instance, \(g(\feat) = \labset\) the finite set of feasible solutions, and \(c(\cdot)\) a cost function. Typically, there is also an associated binary decision problem $\lab$, such as finding the optimal route vs.\ checking whether a route of at most cost \(c_0\) exists  (see \autoref{sec:sat} and \autoref{sec:tsp}). Then, for example, a neural solver \(\hat{\lab} = \model(\feat)\) learns a mapping \(\model: \featset \to \{0, 1\}\) to \emph{approximate} the decision problem. In this work, \(\params\) are the parameters, \(\feat \in \featset\) is the problem instance, and \(\hat{\lab}\) (or \(\hat{\solution}\)) the prediction. In case of supervised learning, we then optimize the parameters \(\params\) w.r.t.\ a loss \(\loss(\model(\feat), \lab)\) over a finite set of labeled training instances \((\feat, \lab)\). However, to obtain the \emph{exact} labels \(\lab\) for a given \(\feat\) is intractable for larger problem instances due to the exponential or worse runtime. Two ways to generate data pairs are mentioned in the introduction and visualized in \autoref{fig:sets_schematic}: (a) an efficient but incomplete data generator (b) using a solver to obtain the labels for random samples.

\section{Adversarial Robustness}\label{sec:advrobustness}

Adversarial robustness refers to the robustness of a machine learning model to a small perturbation of the input instance~\citep{szegedy_intriguing_2014}. We define an adversarial attack 
in \autoref{eq:advattack}, 
where the parameters \(\params\) are constant and  \(\perturbationmodel\) denotes the \emph{perturbation model} that describes the possible perturbed instances \(\pertfeat\) around the clean sample \(\feat\) (i.e.\ the perturbation space) given the original solution \(\solution\). Since \(\pertsolution \ne \solution\) in the general case, we introduce \(\pertsolution = h(\pertfeat, \feat, \solution)\) to model how the solution changes. %
\begin{equation}\label{eq:advattack}
    \loss_{\text{adv},\perturbationmodel}(\feat, \solution) = \max_{\pertfeat} \loss(\model(\pertfeat), \pertsolution) \st \pertfeat \in \perturbationmodel(\feat, \solution) \,\land\, \pertsolution = h(\pertfeat, \feat, \solution)
\end{equation}

\textbf{Sound and efficient perturbation model.} Our framework for a neural combinatorial solver \(\model\) stands out from many other works on adversarial robustness since we choose the perturbation model \(\perturbationmodel\) s.t.\ we provably know a solution \(\pertsolution = h(\pertfeat, \feat, \solution)\) for all possible \(\pertfeat\). We call such a perturbation model \emph{sound}. This stands in contrast to other domains, where we usually \emph{hope to preserve} the label using the subjective concept of \emph{imperceptible/unnoticable} perturbations~\citep{szegedy_intriguing_2014}.

While we can naively obtain a sound perturbation model for combinatorial optimization using a solver, this is intractable for realistic problem sizes. We therefore propose to use perturbation models that are \emph{efficient and sound}. That is, we can determine the updated solution \(\pertsolution\) without applying a solver on the perturbed instance \(\pertfeat\). For example, if we add a node to a TSP instance, the optimal route including the new node will change, but we can efficiently determine \(\pertsolution\) for the chosen \(\perturbationmodel\). 

Important technical details arise due to (a) the potentially non-unique \(\solution\) and (b) non-constant \(\pertsolution\) while perturbing the input. One way to handle both effects is through the choice of the loss \(\loss\). (a) We can deal with the ambiguity in \(\solution\) if the loss is equal for any two optimal solutions/predictions. This can be achieved naturally by incorporating the cost \(c(\solution)\) of the combinatorial optimization problem.
(b) Since the solution \(\pertsolution\) can change throughout the optimization, it is important to choose a loss that assesses the difference between prediction \(\model(\pertfeat)\) and ground truth \(\pertsolution\). For example, a viable loss for TSP is the optimality gap \(\loss_{\text{OG}}(\hat{\solution}, \solution) = \nicefrac{|c(\hat{\solution}) - c(\solution)|}{c(\solution)})\) that is normalized by \(c(\solution)\).

\textbf{Perturbation strength.} With a sound perturbation model, all generated instances \(\pertfeat\) are valid problem instances regardless of how much they differ from \(\feat\). Hence, in the context of combinatorial optimization, the perturbation strength/budget models the severity of a potential distribution shift between training data and test data. This again highlights the differences to other domains. For example in image classification with the common \(\normlp\) perturbation model \(\|\feat - \pertfeat\|_p \leq r\), the instance changes its true label or becomes meaningless (e.g.\ a gray image) for a large enough \(r\). 

\textbf{Generalization.} Specifically, adversarial robustness is one way to measure the generalization over perturbed instances \(\pertfeat\) in the proximity of \(\feat\). Adversarial robustness is important in the context of neural combinatorial solvers since training and validation/test distribution differ from the actual data distribution \(p(\feat)\). First, the data distribution \(p(\feat)\) is typically unknown and highly application-specific. Second, due to theoretical limitations of the data generation process the train and validation/test distribution likely captures a simpler sub-problem suffering from spurious features~\citep{yehuda_its_2020}. Third, we ultimately desire a general-purpose solver that performs well regardless of \(p(\feat)\) (in the limits of a polynomial approximation).

We stress that in the context of combinatorial optimization, adversarial examples are neither anomalous nor statistical defects since all generated instances correspond to valid problem instances. In contrast to other domains, the set of valid problems is not just a low-dimensional manifold in a high-dimensional space. Thus, the so-called manifold hypothesis~\citep{stutz_disentangling_2019} does not apply for combinatorial optimization. In summary, it is critical for neural solvers to perform well on adversarial examples when striving for generalization.

\textbf{Accuracy robustness trade-off.} A trade-off between adversarial robustness and standard generalization was reported for many learning tasks~\citep{tsipras_robustness_2019}. That is, with increasing robustness the accuracy on the test data decreases. Interestingly, with a sound perturbation model and the purely deterministic labels in combinatorial optimization (the solution is either optimal or not), no such trade-off exists. Hence, if the model was expressive enough and we had sufficient compute, there would exist a model with high accuracy and robustness (see \autoref{sec:appendix_proposition_advrisk} for more details).

\textbf{Adversarial training.} In adversarial training, we leverage adversarially perturbed instances with the desire of training a robust model with improved generalization. For this, adversarial attacks reveal the regions that are both difficult for the model and not covered by training samples (see \autoref{fig:sets_schematic}). Hence, adversarial training can be understood as a powerful data augmentation using hard model-specific samples. Though it is not the main focus of this work, in \autoref{sec:empirical_results}, we show that adversarial training can be used to improve the robustness and generalization of a neural combinatorial solver.

\textbf{Remarks on decision problems.} For the binary decision problems, we typically are not required to know \(\pertsolution\); it suffices to know \(\pertlab\). Moreover, for such binary problems, we keep the solution constant \(\lab = \pertlab\), but there also exist practical perturbations that change the label of the decision problem. For example for SAT, we can add a set of clauses that are false in isolation which makes \(\pertlab=0\).

\textbf{Requirements for neural solvers.} We study neural combinatorial solvers \(\model\) that are often a Graph Neural Network (GNN). We then solve \autoref{eq:advattack} using different variants of Projected Gradient Descent (PGD) and therefore assume the model to be differentiable w.r.t.\ its inputs (see \autoref{sec:appendix_pgd}). For non-differentiable models, one can use derivative-free optimization~\citep{yang_derivative-free_2021}.

\section{SAT}\label{sec:sat}

We first introduce the problem as well as notation and then propose the perturbation models (\autoref{sec:sat_perturbation_model}). Last, we discuss the attacks for a specific neural solver (\autoref{sec:sat_solver}).

\textbf{Problem statement.} The goal is to determine if a boolean expression, e.g.\ \((v_1 \lor v_2 \lor \neg v_3) \land (v_1 \lor v_3)\), is satisfiable \(\lab = 1\) or not \(\lab = 0\). Here, we represent the boolean expressions in Conjunctive Normal Form (CNF) that is a conjunction of multiple clauses \(k(v_1, \dots, v_n)\) and each clause is a disjunction of literals \(l_i\). Each literal is a potentially negated boolean variable \(l_i \in \{\neg v_i, v_i\}\) and w.l.o.g.\ we assume that a clause may not contain the same variable multiple times. In our standard notation, a problem instance \(\feat\) represents such a boolean expression in CNF. A solution \(\solution \in \{l_1^*, \dots, l_n^*\ \,|\,l_i^* \in \{\neg v_i, v_i\}\}\) provides truth assignments for every variable. Hence, in the example above \(\feat = (v_1 \lor v_2 \lor \neg v_3) \land (v_1 \lor v_3)\), \(\lab = 1\), and a possible solution is \(\solution = \{v_1, \neg v_2, v_3\}\). Note that multiple optimal \(\solution\) exist but for our attacks it suffices to know one.

\subsection{Sound Perturbation Model}\label{sec:sat_perturbation_model}

We now introduce a \emph{sound and efficient} perturbation model for SAT which we then use for an adversarial attack on a neural (decision) SAT solver. Recall that the perturbation model is sound since we provably obtain the correct label \(\pertlab\) and it is efficient since we achieve this without using a solver. Instead of using a solver, we leverage invariances of the SAT problem.

\begin{proposition}\label{proposition:sat}
    Let \(\feat = k_1(v_1, \dots, v_n) \land \dots k_m(v_1, \dots, v_n)\) be a boolean statement in Conjunctive Normal Form (CNF) with \(m\) clauses and \(n\) variables. Then \(\pertfeat\), a perturbed version of \(\feat\), has the same label \(\lab = \pertlab\) in the following cases:
    \begin{itemize}[leftmargin=3ex, itemsep=0.1ex, topsep=0.1ex]
        \item \textbf{SAT:} \(\feat\) is satisfiable \(\lab = 1\) with truth assignment \(\solution\). Then, we can arbitrarily remove or add literals in \(\feat\) to obtain \(\pertfeat\), as long as one literal in \(\solution\) remains in each clause.
        \item \textbf{DEL:} \(\feat\) is unsatisfiable \(\lab = 0\). Then, we can obtain \(\pertfeat\) from \(\feat\) through arbitrary removals of literals, as long as one literal per clause remains.
        \item \textbf{ADC:} \(\feat\) is unsatisfiable \(\lab = 0\). Then, we can arbitrarily remove, add, or modify clauses in \(\feat\) to obtain \(\pertfeat\), as long as there remains a subset of clauses that is unsatisfiable in isolation.
    \end{itemize}
\end{proposition}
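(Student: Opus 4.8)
The plan is to prove all three cases directly from two elementary monotonicity properties of CNF formulas, so that no solver is ever invoked. \emph{Property (i):} enlarging a clause can only make it easier to satisfy --- if an assignment satisfies a clause $k$, it satisfies any clause $k'$ obtained by adding literals to $k$. \emph{Property (ii):} an assignment that satisfies a CNF satisfies each of its clauses, hence satisfies every sub-CNF obtained by keeping only a subset of the clauses. I would first record these two facts, together with the standing convention that a retained or added literal never duplicates a variable already present in its clause, so that every $\pertfeat$ we produce is a syntactically valid CNF instance with the correct number of clauses.

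For the \textbf{SAT} case, I would show that the original satisfying assignment $\solution$ still works for $\pertfeat$. By hypothesis, every clause of $\pertfeat$ still contains some literal $l$ with $l \in \solution$, i.e.\ a literal that evaluates to true under $\solution$; hence each perturbed clause is satisfied by $\solution$ regardless of which other literals were added or removed. Therefore $\pertfeat$ is satisfiable and $\pertlab = 1 = \lab$. Added literals require no separate treatment, as this is exactly Property (i); the only genuine constraint is the stated one, that an $\solution$-true literal survives in each clause.

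For the \textbf{DEL} case, I would argue by contradiction: suppose some assignment $\solution'$ satisfied $\pertfeat$. Each clause of $\pertfeat$ is a nonempty sub-clause of the corresponding clause of $\feat$ (only literals were deleted, and at least one remains), so by Property (i) the literal of that clause made true by $\solution'$ also lies in the original clause; thus $\solution'$ satisfies every clause of $\feat$, contradicting the unsatisfiability of $\feat$. Hence $\pertfeat$ is unsatisfiable and $\pertlab = 0 = \lab$. For the \textbf{ADC} case, I would again use contradiction with Property (ii): let $S$ be the retained subset of clauses that is unsatisfiable in isolation. If $\solution'$ satisfied $\pertfeat$, then since $\pertfeat$ is a conjunction including every clause of $S$, $\solution'$ would satisfy all of $S$, contradicting that $S$ is unsatisfiable; so $\pertfeat$ is unsatisfiable and $\pertlab = 0 = \lab$.

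I do not anticipate a real obstacle: the arguments are essentially tautological once stated carefully. The care needed is purely bookkeeping --- keeping clauses syntactically valid (no repeated variables, and no accidentally empty clauses in the SAT and DEL cases, which the ``one literal remains'' condition rules out), and being precise that ``a literal in $\solution$'' means a literal that is true under the assignment $\solution$ rather than merely a literal over a variable appearing in $\solution$. It is worth remarking that the ADC argument never uses unsatisfiability of $\feat$ itself; that hypothesis serves only to guarantee that a suitable retained core $S$ exists (every unsatisfiable CNF contains a minimal unsatisfiable subset), which is what makes the perturbation model usable in practice.
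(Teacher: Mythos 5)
Your proof is correct and follows essentially the same route as the paper's: the SAT case by reusing the original satisfying assignment, and the DEL and ADC cases by the same monotonicity/subset arguments (the paper phrases DEL as deletion only shrinking the set of satisfying assignments and ADC via the retained unsatisfiable clause subset, which matches your contradiction arguments). Your added remark that ADC only needs the existence of an unsatisfiable core rather than unsatisfiability of \(\feat\) itself is a fair observation but does not change the argument.
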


\subsection{Neural SAT Solver}\label{sec:sat_solver}

\citet{selsam_learning_2019} propose NeuroSAT, a neural solver for satisfiability-problems that uses a message-passing architecture~\citep{gilmer_neural_2017} on the graph representation of the boolean expressions. The SAT problem is converted into a bipartite graph consisting of clause nodes and literal nodes. For each variable there exist two literal nodes; one represents the variable and the other its negation.
If a literal is contained in a clause its node is connected to the respective clause node. NeuroSAT then recursively updates the node embeddings over the message-passing steps using this graph, and in the last step, the final vote $\hat{y} \in $ \(\{0, 1\}\) is aggregated over the literal nodes.

\textbf{Attacks.} We then use these insights to craft perturbed problem instances \(\pertfeat\) guided by the maximization of the loss \(\loss_{adv_G}\) (see \autoref{eq:advattack}). Specifically, for \textbf{SAT} and \textbf{DEL}, two of admissible perturbations defined in \autoref{proposition:sat}, we optimize over a subset of edges connecting the literal and clause nodes where we set the budget \(\Delta\) relatively to the number of literals/edges in \(\feat\). For \textbf{ADC} we additionally concatenate \(d\) additional clauses and optimize over their edges obeying \(\Delta\) but keep the remaining \(\feat\) constant. 
If not reported separately, we decide for either \textbf{DEL} and \textbf{ADC} randomly with equal odds.

\textbf{\(\mathbf{\normlzero}\)-PGD.} The addition and removal of a limited number of literals is essentially a perturbation with budget \(\Delta\) over a set of discrete edges connecting literals and clauses. Similarly to the \(\normlzero\)-PGD attack of \citet{xu_topology_2019}, we continuously relax the edges in \(\{0, 1\}\) to \([0, 1]\) during optimization. We then determine the edge weights via projected gradient descent s.t.\ the weights are within \([0, 1]\) and that we obey the budget \(\Delta\). After the attack, we use these weights to sample the discrete perturbations in \(\{0, 1\}\). In other words, the attack continuously/softly adds as well as removes literals from \(\feat\) during the attack and afterward we sample the discrete perturbations to obtain \(\pertfeat\). For additional details about the attacks, we refer to \autoref{sec:appendix_sat_details}.

\section{TSP}\label{sec:tsp}

We first introduce the TSP including the necessary notation. Then, we propose a perturbation that adds new nodes s.t.\ we know the optimal route afterward (\autoref{sec:tsp_perturbation_model}). In \autoref{sec:tsp_decision_solver}, we detail the attack for a neural decision TSP solver and, in \autoref{sec:tsp_solver}, we describe the attack for a model that predicts the optimal TSP route \(\solution\).

\textbf{Problem statement.} We are given a weighted graph \(\gG = (\sV, \sM)\) that consist of a finite set of nodes \(\sV\) as well as edges \(\sM \subseteq \sV^2\) and a weight \(\dist(e)\) for each possible edge \(e \in \sV^2\). We use the elements in \(\sV\) as indices or nodes interchangeably. The goal is then to find a permutation \(\tspsolution\) of the nodes \(\sV\) s.t.\ the cost of traversing all nodes exactly once is minimized (i.e.\ the Hamiltonian path of lowest cost):
\begin{equation}\label{eq:tsp}
    \tspsolution^* = \arg\min_{\tspsolution' \in \sS} c(\tspsolution', \gG) = \arg\min_{\tspsolution' \in \sS} \dist(\sigma'_{1}(\sV), \sigma'_{n}(\sV)) + \sum_{i = 1}^{n - 1} \dist(\sigma'_{i}(\sV), \sigma'_{i+1}(\sV))
\end{equation}
where \(\sS\) is the set of all permutations and \(n = |\sV|\) is the number of nodes. Although multiple \(\tspsolution^*\) might exist here it suffices to know one. An important special case is the ``metric TSP'', where the nodes represent coordinates in a space that obeys the triangle inequality (e.g.\ euclidean distance). For notational ease, we interchangeably use \(\tspsolution\) as a permutation or the equivalent list of nodes. Moreover, we say \(\tspsolution\) contains edge \((I,J) \in |\sM|\) if \(I\) and \(J\) are consecutive or the first and last element. In our standard notation \(\feat = \sG\), \(\solution = \tspsolution^*\), and the respective decision problem solves the question if there exist \(c(\tspsolution^*) \le c_0\) of at most \(c_0\) cost.

\subsection{Sound Perturbation Model}\label{sec:tsp_perturbation_model}

Adversarially perturbing the TSP such that we know the resulting solution seems more challenging than SAT. However, assuming we would know the optimal route \(\tspsolution^*\) for graph \(\feat = \gG\), then under certain conditions we can add new nodes s.t.\ we are guaranteed to know the perturbed optimal route \(\tilde{\tspsolution}^*\). Note that this does not imply that we are able to solve the TSP in sub-exponential time in the worst case. We solely derive an efficient special case through leveraging the properties of the TSP. %

\begin{proposition}\label{proposition:tsp}
    Let \(\tspsolution^*\) be the optimal route over the nodes \(\sV\) in \(\sG\), let \(Z \not \in \sV\) be an additional node, and \(P, Q\) are any two neighbouring nodes on \(\tspsolution^*\). Then, the new optimal route \(\tilde{\tspsolution}^*\) (including \(Z\)) is obtained from \(\tspsolution^*\) through  inserting \(Z\) between \(P\) and \(Q\) if \(\not \exists \, (A, B) \in \sV^2 \setminus \{(P, Q)\}\) with \(A \not = B\) s.t. \(\dist(A, Z) + \dist(B, Z) - \dist(A, B) \le \dist(P, Z) + \dist(Q, Z) - \dist(P, Q)\).
\end{proposition}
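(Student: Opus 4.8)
The plan is to reduce the length of an arbitrary Hamiltonian route on $\sV \cup \{Z\}$ to the length of a route on $\sV$ plus a single ``insertion cost'', and then combine optimality of $\tspsolution^*$ with the stated hypothesis. Throughout, write $\delta(A,B) := \dist(A,Z) + \dist(B,Z) - \dist(A,B)$ for the change in tour length caused by replacing an edge $\{A,B\}$ of a route by the two-edge detour through $Z$, and let $\tilde{\sG}$ denote $\sG$ with the node $Z$ and its incident edge weights added.

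First I would establish the structural fact that the Hamiltonian routes on $\sV\cup\{Z\}$ are in bijection with the pairs $(\tspsolution', \{A,B\})$, where $\tspsolution'$ is a Hamiltonian route on $\sV$ and $\{A,B\}$ is one of its edges: given a route $\pi$ on $\sV\cup\{Z\}$, the two cycle-neighbours $A,B$ of $Z$ in $\pi$ determine the edge $\{A,B\}$, and deleting $Z$ while joining $A$ to $B$ produces $\tspsolution'$; conversely, splicing $Z$ into the edge $\{A,B\}$ of $\tspsolution'$ recovers $\pi$. Since $\pi$ and $\tspsolution'$ differ only in that $\{A,B\}$ is replaced by $\{A,Z\}$ and $\{Z,B\}$, this yields the exact identity $c(\pi, \tilde{\sG}) = c(\tspsolution', \sG) + \delta(A,B)$. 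Applying this to the particular route $\tau$ obtained by splicing $Z$ between the adjacent nodes $P$ and $Q$ of $\tspsolution^*$ gives $c(\tau, \tilde{\sG}) = c(\tspsolution^*, \sG) + \delta(P,Q)$.

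Next I would prove the matching lower bound. For an arbitrary Hamiltonian route $\pi$ on $\sV\cup\{Z\}$, decompose it as $(\tspsolution', \{A,B\})$. Optimality of $\tspsolution^*$ on $\sV$ gives $c(\tspsolution',\sG) \ge c(\tspsolution^*,\sG)$, and the hypothesis --- read as a statement about unordered pairs, i.e.\ edges of the complete graph on $\sV$, which is the only sensible reading since $\delta$ is symmetric --- gives $\delta(A,B) \ge \delta(P,Q)$, with equality only if $\{A,B\}=\{P,Q\}$. Adding the two inequalities yields $c(\pi,\tilde{\sG}) \ge c(\tspsolution^*,\sG) + \delta(P,Q) = c(\tau,\tilde{\sG})$, so $\tau$ is an optimal route on $\sV\cup\{Z\}$, which is exactly the route described in the statement. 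For the stronger conclusion I would note that any optimal $\pi$ must attain equality in both inequalities, forcing $\{A,B\}=\{P,Q\}$ and $c(\tspsolution',\sG)=c(\tspsolution^*,\sG)$; hence every optimum is obtained by splicing $Z$ into the $\{P,Q\}$-edge of some optimal route on $\sV$, and in particular $\tilde{\tspsolution}^*=\tau$ is the unique optimum whenever $\tspsolution^*$ is the unique optimum on $\sV$.

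The remaining work is routine bookkeeping: checking that the splice alters exactly the three stated edges and nothing else, and that the correspondence between routes on $\sV\cup\{Z\}$ and pairs (route on $\sV$, chosen edge) is a genuine bijection. The one point that needs care is the reading of the hypothesis: taken literally over ordered pairs it would also range over $(Q,P)$, where $\delta(Q,P)=\delta(P,Q)$ and the strict inequality fails, so the condition must be interpreted over unordered pairs (equivalently, with the set $\{(P,Q),(Q,P)\}$ excluded). I expect this, rather than any calculation, to be the main subtlety; note in particular that the argument uses no metric or triangle-inequality assumption on $\dist$.
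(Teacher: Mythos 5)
Your argument is correct, and at its core it rests on the same two facts as the paper's proof: deleting \(Z\) from any tour on \(\sV\cup\{Z\}\) yields a tour on \(\sV\) whose cost is smaller by exactly \(\delta(A,B)=\dist(A,Z)+\dist(B,Z)-\dist(A,B)\) for the neighbours \(A,B\) of \(Z\), and splicing \(Z\) into the edge \((P,Q)\) of \(\tspsolution^*\) yields a candidate tour of cost \(c(\tspsolution^*)+\delta(P,Q)\). The paper packages these as a proof by contradiction: it assumes the neighbours \((R,S)\) of \(Z\) in the new optimum differ from \((P,Q)\), invokes optimality of \(\tspsolution^*\) and of \(\tilde{\tspsolution}^*\) to get \(\delta(P,Q)\ge\delta(R,S)\), and contradicts the hypothesis. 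You instead run the argument forward, lower-bounding \emph{every} tour on \(\sV\cup\{Z\}\) by \(c(\tspsolution^*)+\delta(P,Q)\) via your decomposition, which buys a little more: it makes explicit the final step (that the spliced tour is itself optimal), which the paper leaves implicit after its contradiction, and it immediately characterizes all optima (each arises by splicing \(Z\) into the \(\{P,Q\}\)-edge of an optimal tour on \(\sV\)), matching the paper's side remark about multiple optimal routes. Your observation about ordered versus unordered pairs is also a fair catch: read literally over \(\sV^2\setminus\{(P,Q)\}\), the pair \((Q,P)\) satisfies \(\delta(Q,P)=\delta(P,Q)\le\delta(P,Q)\), so the hypothesis would be vacuously unsatisfiable; both the paper's proof and yours implicitly treat neighbour pairs as unordered, and neither argument needs any metric assumption. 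One minor remark: the full bijection is not needed, only the direction that every tour on \(\sV\cup\{Z\}\) decomposes as (tour on \(\sV\), edge) with the stated cost identity, which is exactly the inequality the paper derives from "optimality of \(\tspsolution^*\)".
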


\begin{corollary}\label{corollary:tsp_multi}
    We can add multiple nodes to \(\sG\) and obtain the optimal route \(\tilde{\tspsolution}^*\) as long as the condition of \autoref{proposition:tsp} (including the other previously added nodes) is fulfilled.
\end{corollary}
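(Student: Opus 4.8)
The natural route is induction on the number $t$ of inserted nodes, using \autoref{proposition:tsp} for both the base case and the inductive step. Write $Z_1,\dots,Z_t$ for the nodes added in this order, set $\sV_0=\sV$ and $\sV_j=\sV\cup\{Z_1,\dots,Z_j\}$, and let $\sG_j$ be the graph on $\sV_j$ obtained by extending the weight function $\dist$ to all edges incident to $Z_1,\dots,Z_j$ (in the metric case this extension is forced by the coordinates of the new nodes). For each $j$ let $(P_j,Q_j)$ be the prescribed pair of consecutive nodes on the optimal route of $\sG_{j-1}$ next to which $Z_j$ is to be spliced. The hypothesis ``the condition of \autoref{proposition:tsp} including the previously added nodes is fulfilled'' means precisely: for every $j$ there is no $(A,B)\in\sV_{j-1}^2\setminus\{(P_j,Q_j)\}$ with $A\ne B$ such that $\dist(A,Z_j)+\dist(B,Z_j)-\dist(A,B)\le\dist(P_j,Z_j)+\dist(Q_j,Z_j)-\dist(P_j,Q_j)$.

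I would then argue as follows. Let $\tspsolution^*_0$ be the given optimal route of $\sG_0$, and inductively define $\tspsolution^*_j$ to be $\tspsolution^*_{j-1}$ with $Z_j$ inserted between $P_j$ and $Q_j$. The claim to prove by induction on $j$ is that $\tspsolution^*_j$ is an optimal route of $\sG_j$; taking $j=t$ yields the corollary. The base case $j=0$ holds by assumption. For the inductive step, assume $\tspsolution^*_{j-1}$ is optimal for $\sG_{j-1}$ and apply \autoref{proposition:tsp} with the substitution $\sG\mapsto\sG_{j-1}$, $\sV\mapsto\sV_{j-1}$, $\tspsolution^*\mapsto\tspsolution^*_{j-1}$, $Z\mapsto Z_j$, $(P,Q)\mapsto(P_j,Q_j)$. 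Its hypothesis is exactly the $j$-th instance of the condition displayed above, and its conclusion is that the optimal route of $\sG_j$ is $\tspsolution^*_{j-1}$ with $Z_j$ inserted between $P_j$ and $Q_j$, i.e.\ $\tspsolution^*_j$. This closes the induction.

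The one point that needs care — and the step I would flag as the main obstacle — is checking that \autoref{proposition:tsp} applies verbatim at each stage. It has two hypotheses: (i) the route one starts from must be a genuine optimum over the current node set, which is supplied by the induction hypothesis and is the reason the single-step proposition has to be re-applied rather than used only once on $\sV$; and (ii) the strict detour-minimality condition must be verified against \emph{all} pairs drawn from the current node set $\sV_{j-1}$, which is exactly why the corollary states ``including the other previously added nodes'' — a previously inserted $Z_i$ can provide a competing (or strictly better) insertion slot for $Z_j$, so checking the condition only against the original $\sV$ would be insufficient. I would also remark that the statement is inherently order-dependent: a given collection of new nodes may satisfy the chain of conditions for one insertion order and fail it for another, so the corollary should be read as a claim about a fixed ordered sequence of insertions.
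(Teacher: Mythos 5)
Your induction on the number of inserted nodes, re-applying \autoref{proposition:tsp} at each stage with the node set enlarged by the previously added nodes, is exactly the argument the paper intends (its proof consists of the remark that the corollary ``follows by induction''). Your proposal is correct and simply spells out this same approach in full detail, including the correct reading of ``including the other previously added nodes'' and the order-dependence of the insertion sequence.
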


\begin{corollary}\label{corollary:tsp_geo}
    For the metric TSP, it is sufficient if the condition of \autoref{proposition:tsp} holds for \((A, B) \in \sV^2 \setminus (\{(P, Q)\} \,\cup\, \sH)\)  with \(A \not = B\) where \(\sH\) denotes the pairs of nodes both on the Convex Hull \(\sH \in \text{CH}(\sV)^2\) that are not a line segment of the Convex Hull.
\end{corollary}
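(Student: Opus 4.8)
The plan is to keep the argument behind \autoref{proposition:tsp} intact and to strengthen only the single place where candidate insertion edges are discarded. Abbreviate by $\delta(A,B) := \dist(A,Z) + \dist(B,Z) - \dist(A,B)$ the cost of splicing $Z$ into the edge $(A,B)$; in the metric case $\delta(A,B)\ge 0$. Fix a non-self-crossing optimal route $\tilde{\tspsolution}^*$ on $\sV \cup \{Z\}$ (one exists: repeatedly $2$-opt, which never increases length and, by the triangle inequality, eliminates crossings), and let $A^\star,B^\star$ be the two neighbours of $Z$ on $\tilde{\tspsolution}^*$. Exactly as in the proof of \autoref{proposition:tsp}: deleting $Z$ from $\tilde{\tspsolution}^*$ gives a route on $\sV$ of cost $c(\tilde{\tspsolution}^*) - \delta(A^\star,B^\star) \ge c(\tspsolution^*)$, while splicing $Z$ between $P$ and $Q$ into $\tspsolution^*$ gives a route on $\sV\cup\{Z\}$ of cost $c(\tspsolution^*) + \delta(P,Q) \ge c(\tilde{\tspsolution}^*)$; combining the two forces $\delta(A^\star,B^\star) \le \delta(P,Q)$. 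The weakened hypothesis of the corollary says that no pair outside $\{(P,Q)\}\cup\sH$ can achieve $\delta\le\delta(P,Q)$, so $(A^\star,B^\star)\in\{(P,Q)\}\cup\sH$; and when $(A^\star,B^\star)=(P,Q)$ the two displayed inequalities collapse to equalities, which is exactly the statement that splicing $Z$ between $P$ and $Q$ is optimal. Thus everything reduces to one geometric claim: \emph{for the (planar) metric TSP, the pair $(A^\star,B^\star)$ flanking $Z$ on a non-self-crossing optimal route is never in $\sH$}, i.e.\ if $A^\star$ and $B^\star$ are both vertices of $\text{CH}(\sV)$ they are consecutive on $\text{CH}(\sV)$.

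To prove this I would use, besides non-self-crossing of optimal routes, the classical fact that a non-self-crossing Hamiltonian route on a planar point set traverses the vertices of that set's convex hull in convex-position order. \emph{Case $Z \in \text{CH}(\sV)$:} here $\text{CH}(\sV\cup\{Z\}) = \text{CH}(\sV)$ and $Z$ is not a hull vertex, so $\tilde{\tspsolution}^*$ visits the vertices $H_1,\dots,H_k$ of $\text{CH}(\sV)$ in cyclic hull order; since the only node between $A^\star$ and $B^\star$ on one side of the route is $Z$, which is not among the $H_i$, the nodes $A^\star,B^\star$ are consecutive in the hull-vertex subsequence of $\tilde{\tspsolution}^*$, hence consecutive on $\text{CH}(\sV)$, so $(A^\star,B^\star)\notin\sH$.

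The case $Z \notin \text{CH}(\sV)$ is the main obstacle: now $\text{CH}(\sV\cup\{Z\})$ properly contains $\text{CH}(\sV)$, the chain of hull vertices of $\sV$ facing $Z$ is no longer on the outer hull, and a route that is merely non-self-crossing can visit the vertices of $\text{CH}(\sV)$ out of order -- so simplicity alone is insufficient and optimality must enter. I would assume $A^\star,B^\star$ are non-consecutive vertices of $\text{CH}(\sV)$; then the chord $\overline{A^\star B^\star}$ splits $\sV$ into two nonempty sets, and after relabelling $Z$ lies on the side of the line $\overline{A^\star B^\star}$ containing one of them. Deleting $Z$ leaves a Hamiltonian path on $\sV$ from $A^\star$ to $B^\star$ that visits both sides, hence contains an edge $(C,D)$ crossing the open segment $\overline{A^\star B^\star}$ with $C$ on $Z$'s side and $D$ on the other. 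Since $\tilde{\tspsolution}^*$ is non-self-crossing, its edges $\overline{A^\star Z}$, $\overline{Z B^\star}$, $\overline{C D}$ are pairwise non-crossing; I would then show that this, together with the fact that the ``swallowed'' hull vertices of $\sV$ lie inside the triangle $A^\star Z B^\star$, forces either one of these crossings after all or a strictly improving exchange -- e.g.\ relocating $Z$ from between $A^\star$ and $B^\star$ to between $C$ and $D$, or a local $2$-opt or $3$-opt move -- contradicting optimality of $\tilde{\tspsolution}^*$. The hard part is discharging this last step uniformly: depending on where $C$, $D$ and $Z$ sit relative to the triangle $A^\star Z B^\star$ one needs different exchanges, and the required inequality (a bound of the form $\delta(C,D) < \delta(A^\star,B^\star)$ or a $3$-opt analogue) must be extracted from the convex-quadrilateral inequality -- in a convex quadrilateral the sum of the diagonals exceeds the sum of either pair of opposite sides -- applied to the four points at hand. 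I expect this to close, but the case bookkeeping is where the real work lies.

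Finally, a short general-position clause (no three nodes collinear, $Z$ not on a line through two nodes, so that a non-self-crossing optimal route is unique up to direction) removes the degenerate configurations used above -- collinear triples, $Z$ on the chord, equality in the triangle inequality during $2$-opt -- and leaves the main case analysis untouched.
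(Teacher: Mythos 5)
Your reduction of the corollary to the single claim that, on a non-crossing optimal route of \(\sV \cup \{Z\}\), the two nodes flanking \(Z\) can never form a pair in \(\sH\) is exactly the intended argument: the paper's own proof is one sentence, invoking the fact that in the metric case \(\tilde{\tspsolution}^*\) must be a simple polygon (Cutler's ``intersection theorem''), which is precisely the hull-order property you use. Your first case (\(Z\) inside \(\text{CH}(\sV)\)) is complete and correct, and your observation that for \(Z\) outside the hull ``simplicity alone is insufficient and optimality must enter'' is sharper than the paper's justification. But that second case is only a plan: you yourself concede that the decisive exchange argument and its case bookkeeping are left open. Since nothing in the statement of \autoref{corollary:tsp_geo} restricts \(Z\) to the interior of \(\text{CH}(\sV)\), this is a genuine gap, not a formality.

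Moreover, the open case does not appear to be closable as you hope. If \(Z\) lies outside \(\text{CH}(\sV)\) and ``swallows'' a hull vertex \(W\) (i.e.\ \(W\) becomes interior to \(\text{CH}(\sV \cup \{Z\})\)), an optimal route can flank \(Z\) by the two tangency vertices \(R,S\), which are non-adjacent on \(\text{CH}(\sV)\), while \(W\) is collected elsewhere at negligible cost. Concretely, take \(R=(0,0)\), \(S=(4,0)\), \(W=(1.9,0.2)\), a dense, slightly convex chain of points just below the segment from \(R\) to \(S\), and \(Z=(2,10)\). The optimal route \(\tspsolution^*\) on \(\sV\) traverses the chain and closes via \(S, W, R\), so \(P=W\), \(Q=S\) are neighbours on \(\tspsolution^*\) and \((W,S)\) is a hull edge; the only pairs the weakened condition asks you to check are hull edges, and each has insertion cost \(\dist(A,Z)+\dist(B,Z)-\dist(A,B)\) strictly larger than that of \((W,S)\) (about \(17.9\); e.g.\ \((W,R)\) gives about \(18.1\), chain edges about \(19.9\)), so the hypothesis of \autoref{corollary:tsp_geo} holds. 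Yet splicing \(Z\) between \(W\) and \(S\) is beaten by the route that picks up \(W\) via a tiny detour from the chain (cost about \(0.34\)) and inserts \(Z\) into the edge \((R,S)\) (cost about \(16.4\) instead of \(17.9\)); that route is simple, so the non-crossing property alone cannot exclude it. Hence in the regime you left open, the flanking pair genuinely can lie in \(\sH\), and no amount of 2-opt/3-opt bookkeeping will rule it out: the statement seems to require an additional hypothesis (e.g.\ \(Z \in \text{CH}(\sV)\), so that no hull vertex is swallowed), under which your completed first case already constitutes the entire proof. Note that the paper's one-line argument glosses over exactly this configuration as well.
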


\subsection{Neural Decision TSP Solver}\label{sec:tsp_decision_solver}

\citet{Prates2019} propose a GNN (called DTSP) to solve the decision variant of the TSP for an input pair \(\feat = (\gG, c_0)\) with graph \(\gG\) and a cost \(c_0\). DTSP predicts whether there exists a Hamiltonian cycle in \(\gG\) of cost \(c_0\) or less (\(\lab = 1\) if the cycle exists). 

Based on our perturbation model, we inject adversarial nodes. For the metric TSP, we determine their coordinates by maximizing the binary cross-entropy--a continuous optimization problem. This is easy to generalize to the non-metric TSP (omitting \autoref{corollary:tsp_geo}), if e.g.\ the triangle equality does not hold or there is no ``simple'' cost function concerning the node's coordinates/attributes. Then, the optimization is performed over the edge weights, but depending on what the weights represent we might need to enforce further requirements.

Unfortunately, the constraint in \autoref{proposition:tsp} is non-convex and it is also not clear how to find a relaxation that is still sufficiently tight and can be solved in closed form. For this reason, when the constraint for a node is violated, we use vanilla gradient descent with the constraint as objective: \(\dist(P, Z) + \dist(Q, Z) - \dist(P, Q) - [\min_{A,B} \dist(A, Z) + \dist(B, Z) - \dist(A, B)]\). This penalizes if a constraint is violated. We stop as soon as the node fulfills the requirement/constraint again and limit the maximum number of iterations to three. Since some adversarial nodes might still violate the constraint after this projection, we only include valid nodes in each evaluation of the neural solver. Moreover, for optimizing over multiple adversarial nodes jointly and in a vectorized implementation, we assign them an order and also consider previous nodes while evaluating the constraint. Ordering the nodes allows us to parallelize the constraint evaluation for multiple nodes, despite the sequential nature, since we can ignore the subsequent nodes.

\subsection{Neural TSP Solver}\label{sec:tsp_solver}

\citet{joshi_efficient_2019} propose a Graph Convolutional Network (ConvTSP) to predict which edges of the euclidean TSP graph are present in the optimal route. The probability map over the edges is then decoded into a permutation over the nodes via a greedy search or beam search. We use the same attack as for the TSP decision problem (see \autoref{sec:tsp_decision_solver}) with the exception of having a different objective with changing label \(\tilde{\solution}\). Although the optimality gap \(\loss(\hat{\solution}, \solution) = \nicefrac{c(\hat{\solution}) - c(\solution)}{c(\solution)}\) is a natural choice and common in the TSP literature
~\citep{kool_attention_2019}, it proved to be tough to backpropagate through the decoding of the final solution from the soft prediction. Hence, for ConvTSP we maximize the cross-entropy over the edges. Hence, we perturb the input s.t.\ the predicted route is maximally different from the optimal solution \(\pertsolution\) and then report the optimality gap.

\section{Empirical Results}\label{sec:empirical_results}

In this section, we show that the assessed SAT and TSP neural solvers are not robust w.r.t.\ small perturbations of the input using the sound perturbation models introduced in \autoref{sec:sat} and \ref{sec:tsp}. We first discuss SAT in \autoref{sec:empirical_results_sat} and then TSP in \autoref{sec:empirical_results_tsp}. We use the published hyperparameters by the respective works for training the models. We run the experiments for at least five randomly selected seeds. We compare the accuracy on the clean and perturbed problem instances (i.e.\ clean vs.\ adversarial accuracy). Since no directly applicable prior work exists, we compare to the random baseline that randomly selects the perturbation s.t.\ the budget is exhausted. Moreover, we use Adam~\citep{kingma_adam_2015} and early stopping for our attacks. For further details we refer to \autoref{sec:appendix_sat_details} and \autoref{sec:appendix_tsp_details} as well as the code \url{https://www.daml.in.tum.de/robustness-combinatorial-solvers}.

\begin{figure}[hb]
    \resizebox{\linewidth}{!}{
    \(\begin{array}{cc}
      \subfloat[SAT 10-40]{\includegraphics[width=0.5\linewidth]{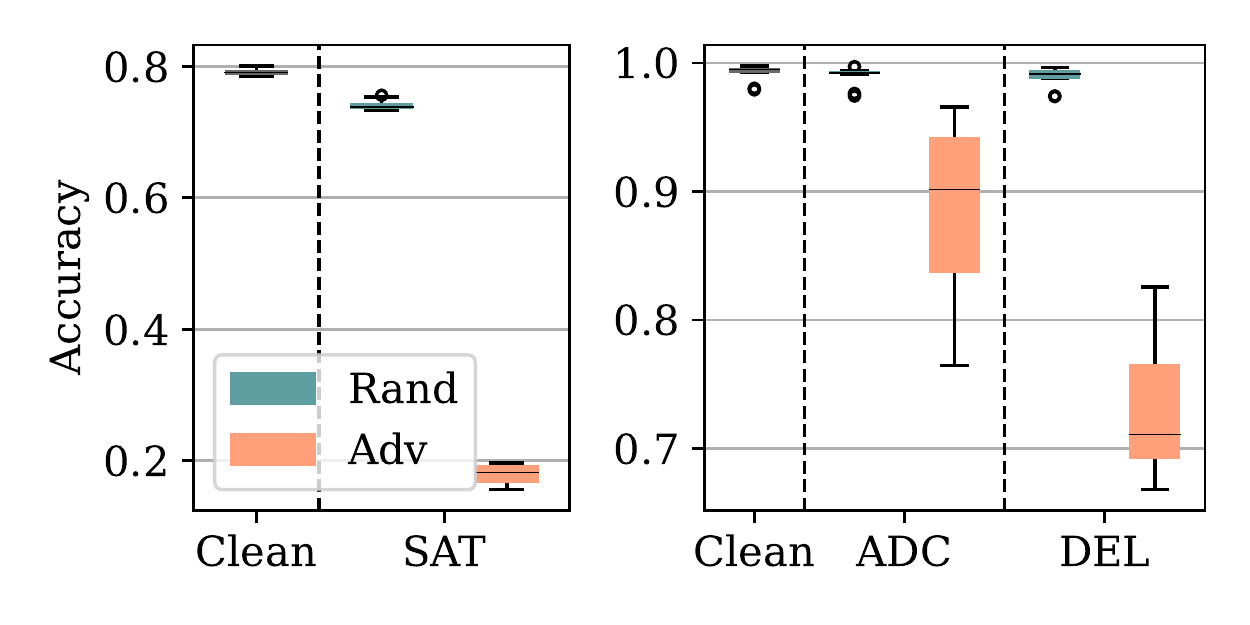}} &
      \subfloat[SAT 50-100]{\includegraphics[width=0.5\linewidth]{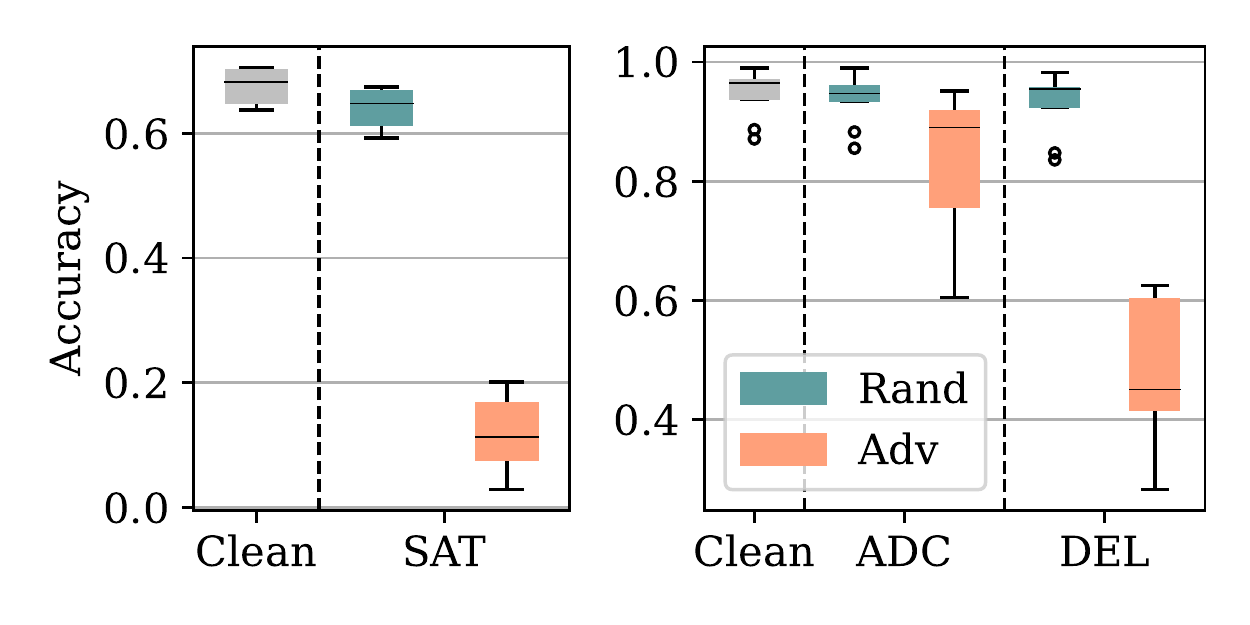}} \\
    \end{array}\)
    }
    \caption{Efficacy of adversarial attacks as introduced in \autoref{sec:sat} and assessment of (un)robustness of the NeuroSAT model on the 10-40 dataset. Recall that SAT is the attack on the satisfiable problem instances and ADC as well as DEL are the attacks on unsatisfiable problem instances. \label{fig:sat_attacks}}
\end{figure}

\clearpage
\subsection{SAT}\label{sec:empirical_results_sat}

\begin{wrapfigure}[8]{r}{0.25\textwidth}
    \vspace{-15pt}
    \resizebox{\linewidth}{!}{
    \includegraphics[width=\linewidth]{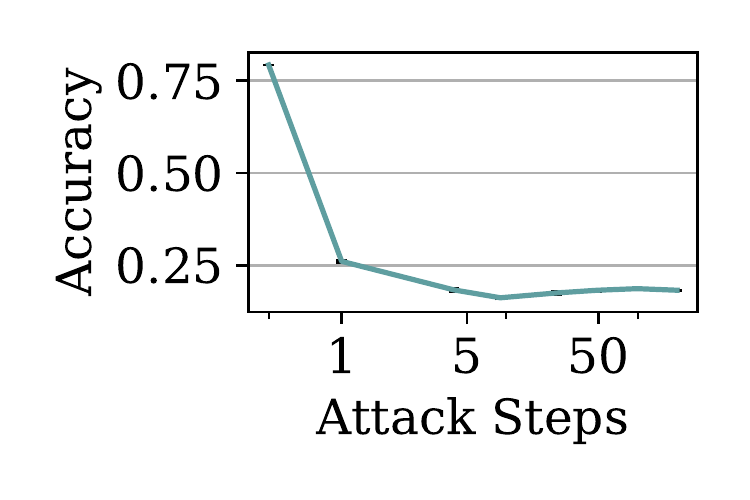}
    }
    \vspace{-18pt}
    \caption{SAT Attack on NeuroSAT \label{fig:sat_steps}}
\end{wrapfigure}
\textbf{Setup.} Following ~\citet{selsam_learning_2019}, we train NeuroSAT for 60 epochs using the official parameters and data generation. The random data generator for the training/validation data greedily adds clauses until the problem becomes unsatisfiable which is determined by an exact SAT solver \citep{imms-sat18, minisat}. We are then left with an unsatisfiable problem instance and a satisfiable problem instance if we omit the last clause (i.e.\ the dataset is balanced). For each instance pair, the number of variables is sampled uniformly within a specified range and then the number of literals in each clause is drawn from a geometric distribution. We name the dataset accordingly to the range of numbers of variables. For example, the training set 10-40 consists of problem instances with 10 to 40 variables. For our attacks, we use the budgets of \(\Delta_{DEL} = 5\%\) as well as \(\Delta_{SAT} = 5\%\) relatively to the number of literals in \(\feat\) and for ADC we add an additional 25\% of clauses and enforce the average number of literals within the new clauses.

\begin{wrapfigure}[11]{l}{0.325\textwidth}
    \vspace{-16pt}
    \centering
    \resizebox{\linewidth}{!}{
    \includegraphics[width=\linewidth]{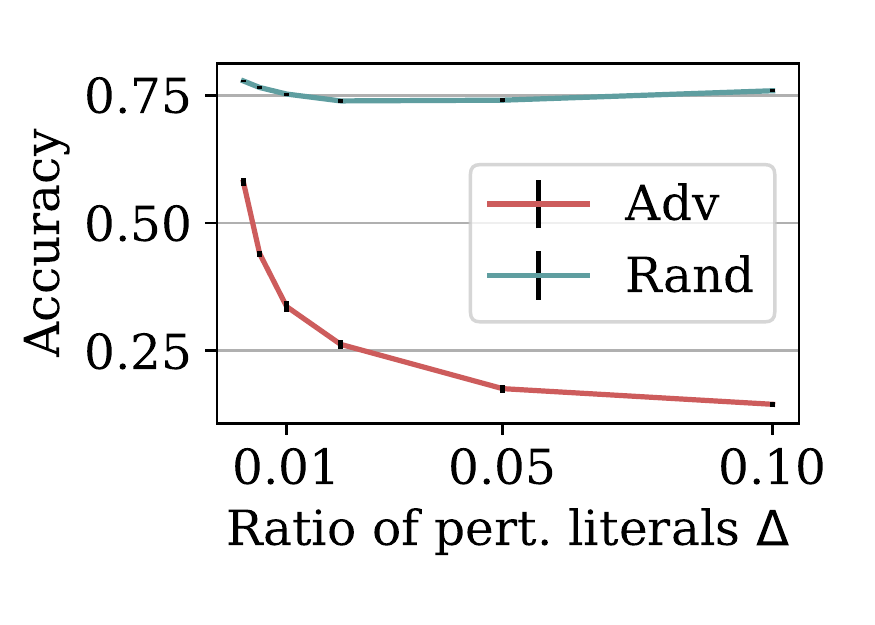}
    }
    \vspace{-15pt}
    \caption{Rob.\ on satisfiable problems (SAT) over budgets \(\Delta\).}
    \label{fig:sat_budget}
\end{wrapfigure}
\textbf{Attack efficacy and model (un)robustness.} From the results of our adversarial attacks presented in \autoref{fig:sat_attacks} it is apparent that the studied model NeuroSAT is not robust w.r.t.\ small perturbation of the input. Additionally, \autoref{fig:sat_steps} shows that for the SAT attack, one gradient update step already suffices to decrease the accuracy to 26\% (see also \autoref{sec:appendix_sat_step}). All this shows the efficacy of our attacks and perturbation model, it also highlights that the standard accuracy gives a too optimistic impression of the model's performance. We hypothesize that the model likely suffers from challenges \emph{(1) easier subproblem} and/or \emph{(2) spurious features}, while it is also possible that the fragility is due to a lack of expressiveness.

\begin{wrapfigure}[10]{r}{0.35\textwidth}
\vspace{-16pt}
\centering
\resizebox{\linewidth}{!}{
    \includegraphics[width=\textwidth]{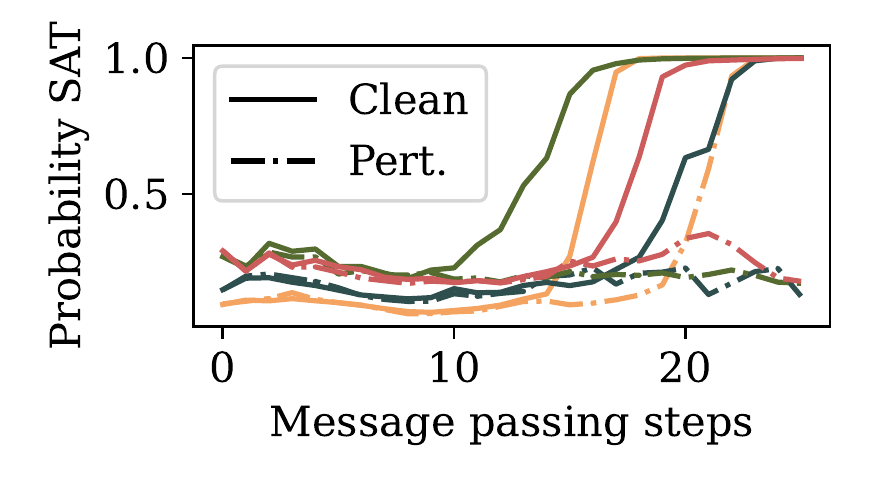}
}
\caption{NeuroSAT's prediction over the message passing steps.} 
\label{fig:sat_qualitative}
\end{wrapfigure}
\textbf{Difficulty imbalance.} It is much harder for the model to spot satisfiable instances than unsatisfiable ones. This is apparent from the clean accuracy and even more obvious from the adversarial accuracy. Even with moderate budgets, we are able to lower the adversarial accuracy to values below 20\% for satisfiable instances while for unsatisfiable instances we barely get below 50\% even on the larger problem instances 50-100. An intuitive explanation is given by the fact that it is impossible to find a solution for an unsatisfiable instance (and it is cheap to verify a candidate solution). Similarly, \citet{selsam_learning_2019} hypothesize that NeuroSAT only changes its prediction if it finds a solution. Thus, it is even more remarkable how we are still able to fool the neural solver in 30\% of the cases (DEL attack).

\textbf{Qualitative insights.} We show in \autoref{fig:sat_qualitative} how NeuroSAT's decision evolves over the message-passing steps for satisfiable instances. NeuroSAT comes to its conclusion typically after 15 message-passing steps for the clean samples. For the perturbed samples, NeuroSAT almost never comes to the right conclusion. For the instances where NeuroSAT predicts the right label, it has a hard time doing so since it converges slower. For a specific adversarial example see \autoref{sec:appendix_sat_qualitative}.

\begin{wraptable}[13]{r}{0.475\textwidth}
\vspace{-16pt}
\caption{Accuracy comparison of regular training with a 10\% larger training set and adversarial finetuning of 10 extra epochs (17\%). \label{tab:sat_advtrain}}
\resizebox{\linewidth}{!}{
    \begin{tabular}{llccc}
    \toprule
    \multicolumn{2}{c}{\textbf{Data}} & \textbf{Regular} & \textbf{Extra data} & \textbf{Adv.\ train.} \\
    \midrule
    \multirow{4}{*}{\rotatebox{90}{10-40}} & Train  &     89.0 \(\pm\) 0.06 &        \textbf{89.1 \(\pm\) 0.05} &           88.8 \(\pm\) 0.06 \\
    & Test      &     89.1 \(\pm\) 0.10 &        89.1 \(\pm\) 0.07 &           \textbf{89.6 \(\pm\) 0.06} \\
    & Random      &     86.4 \(\pm\) 0.09 &        86.3 \(\pm\) 0.11 &           \textbf{87.3 \(\pm\) 0.07} \\
    & Attack        &     50.0 \(\pm\) 1.16 &        49.6 \(\pm\) 1.52 &             \textbf{54.0 \(\pm\) 0.48}  \\
    \multirow{3}{*}{\rotatebox{90}{50-100}} & Test   &     81.1 \(\pm\) 0.64 &        81.2 \(\pm\) 0.89 &            \textbf{82.7 \(\pm\) 0.50} \\
    & Random   &     78.6 \(\pm\) 0.80 &        79.0 \(\pm\) 1.02 &             \textbf{80.8 \(\pm\) 0.46} \\   
    & Attack &     39.4 \(\pm\) 3.15 &        37.9 \(\pm\) 2.62 &             \textbf{44.4 \(\pm\) 1.19} \\  
    \multicolumn{2}{c}{3-10}         &     92.3 \(\pm\) 0.57 &        92.7 \(\pm\) 0.30 &             \textbf{93.0 \(\pm\) 0.33} \\
    \multicolumn{2}{c}{100-300}         &     64.4 \(\pm\) 1.53 &        65.3 \(\pm\) 1.63 &             \textbf{67.0 \(\pm\) 0.74} \\
    \multicolumn{2}{c}{SATLIB}            &     66.1 \(\pm\) 3.07 &        \textbf{66.2 \(\pm\) 4.71} &             63.7 \(\pm\) 1.88 \\
    \multicolumn{2}{c}{UNI3SAT}               &     86.0 \(\pm\) 0.75 &        85.1 \(\pm\) 0.80 &             \textbf{86.9 \(\pm\) 0.39} \\
    \bottomrule
    \end{tabular}
}
\end{wraptable}
\textbf{Attack budget.} We study the influence of the budget \(\Delta\) and, hence, the similarity of \(\feat\) vs.\ \(\pertfeat\), in \autoref{fig:sat_budget}. It suffices to perturb 0.2\% of the literals for the recall to drop below 50\% using our SAT perturbation model. This stands in stark contrast to the random attack, where the accuracy for the satisfiable instances is almost constant over the plotted range of budgets.

\textbf{Hard model-specific samples.} The surprisingly weak performance (\autoref{fig:sat_attacks} and \ref{fig:sat_budget}) of the random baselines shows how much more effective our attacks are and justifies their minimally larger cost (see \autoref{sec:appendix_sat_convergence}). In \autoref{sec:appendix_sat_time_cost}, we show the difficulty is indeed model-specific. Assuming that hard model-specific instances are the instances that are important to improve the model's performance, we can lower the amount of labeled data (potentially expensive). Of course, we cannot do anything about the NP-completeness of the problem, but adversarial robustness opens the possibility to use the expensively generated examples as effectively as possible.

\textbf{Adversarial training for SAT.} We conjecture that if the models were expressive enough, we would now be able to leverage the adversarial examples for an improved training procedure. Therefore, similar to \citet{jeddi_simple_2020}, we perform an adversarial \emph{fine-tuning}. That is, we train the models for another 10 epochs including perturbed problem instances. We use the same setup as for the attacks presented above but we observed that too severe perturbations harm NeuroSAT's performance (e.g.\ a budget of 5\% suffices to push the accuracy below 20\% for the satisfiable instances). We therefore lower the budget of the satisfiable instances to 1\% and perturb 5\% of the training instances. For a fair comparison, we also compare to a model that was trained on a 10\% larger training set. To compare the solvers' capability to generalize for the different training strategies, we choose datasets of different problem sizes and also include the external benchmark datasets SATLIB and UNI3SAT~\citep{hoos_satlib_2000}. We consistently outperform the regularly trained models in terms of robustness as well as generalization (with the exceptions SATLIB). We report the results in~\autoref{tab:sat_advtrain}.

\subsection{TSP}\label{sec:empirical_results_tsp}

\textbf{Setup.} In our setup we follow \citet{Prates2019} and generate the training data by uniformly sampling \(n \sim U(20, 40)\) nodes/coordinates from the 2D unit square. This is converted into a fully connected graph where the edge weights represent the \(\normltwo\)-distances between two nodes. A near-optimal solution \(\solution\) for training and attacks is obtained with the Concorde solver \citep{concorde}. For the decision-variant of the TSP, we produce two samples from every graph: a graph with \(\lab = 1\) and cost-query \(c_0^{\lab = 1}=c(\tspsolution^*) \cdot (1 + d)\) and a second graph with \(\lab = 0\) and cost-query \(c_0^{\lab = 0}=c(\tspsolution^*) \cdot (1 - d)\), where \(d=2\%\) \citep{Prates2019}. For predicting the TSP solution we use ConvTSP ~\citep{joshi_efficient_2019} but keep the setup identical to its decision equivalent.
We attack these models via adding five adversarial nodes and adjust \(\tilde{c_0}\) as well as \(\pertsolution\) accordingly. 

\textbf{Decision TSP Solver.} If a route of target cost exists, our attack successfully fools the neural solver in most of the cases. This low adversarial accuracy highlights again that the clean accuracy is far too optimistic and that the model likely suffers from challenges \emph{(1) easier subproblem} and/or \emph{(2) spurious features}. In \autoref{fig:dtsp_qualitative}, we see that the changes are indeed rather small and that the perturbations lead to practical problem instances. For further examples see \autoref{sec:appendix_tsp_qualitative}.

\begin{figure}[hb]
\centering
\begin{minipage}{.51\textwidth}
    \includegraphics[width=\textwidth]{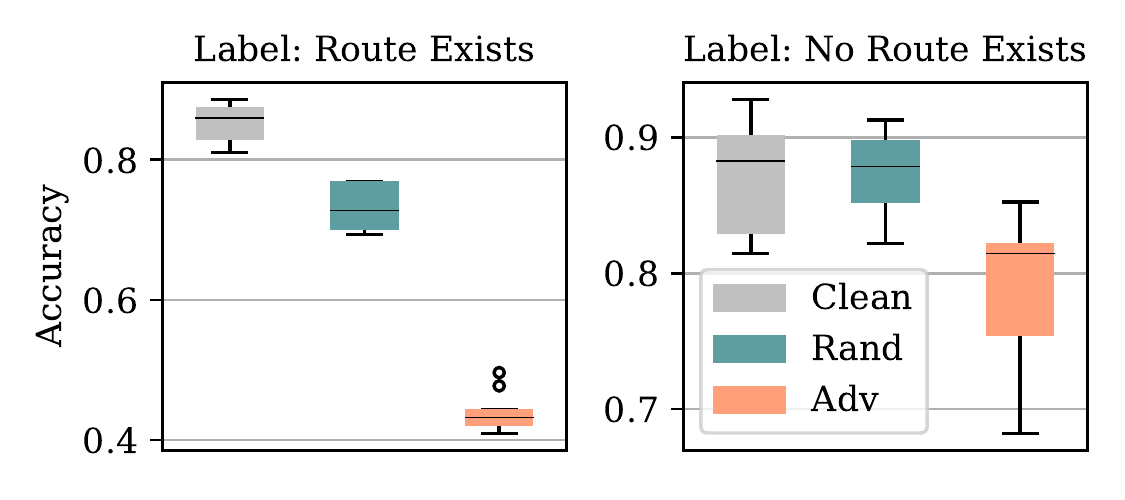}
    \caption{DecisionTSP for problems with \(n \sim U(20,40)\) nodes and five adversarial nodes.} 
    \label{fig:dtsp_attack}
\end{minipage}
\hfill
\begin{minipage}{.455\textwidth}
    \centering
    \includegraphics[width=0.8\textwidth]{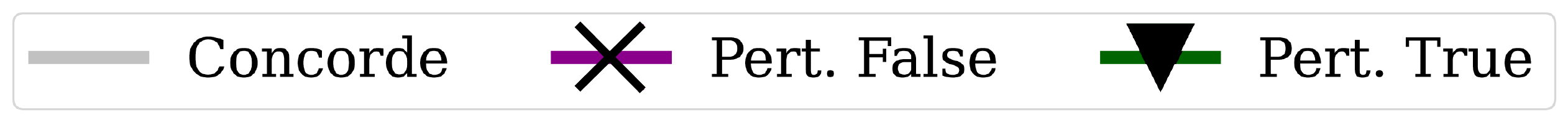}
    \resizebox{\linewidth}{!}{
        \(\begin{array}{cc}
        \includegraphics[width=\linewidth]{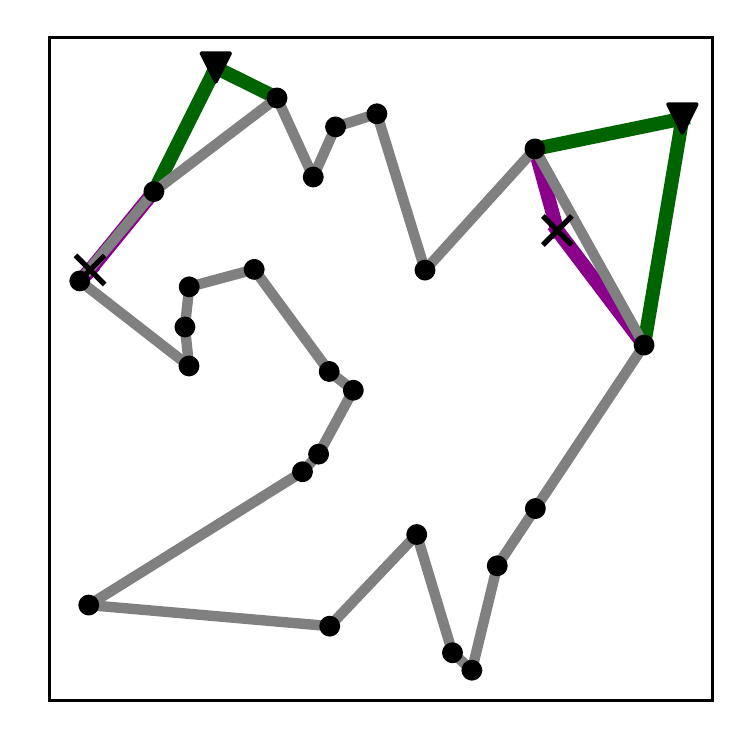} &
        \includegraphics[width=\linewidth]{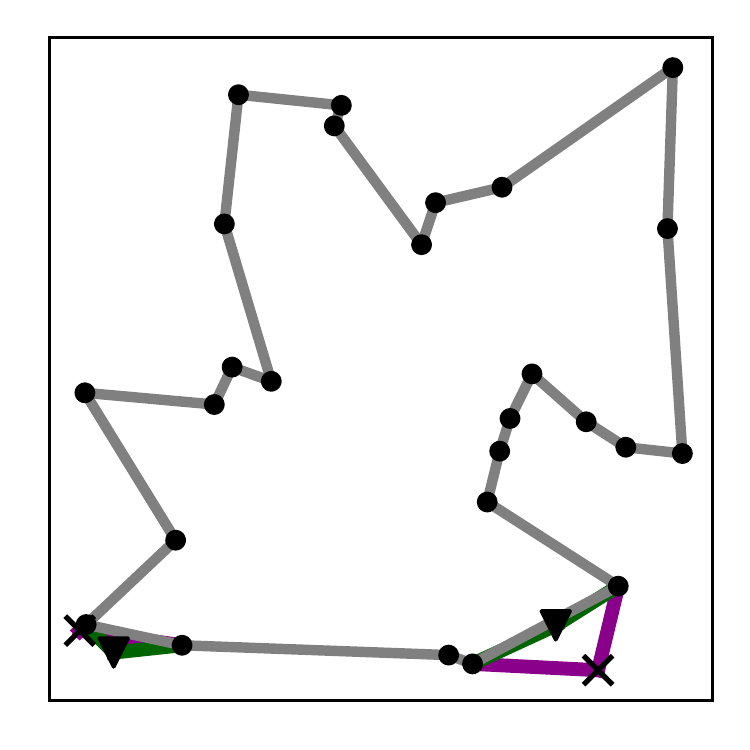} \\
        \end{array}\)
    }
    \caption{Examples of the optimal route \(\solution\) and perturbed routes \(\pertsolution\) for DecisionTSP. \label{fig:dtsp_qualitative}}
\end{minipage}
\end{figure}

\textbf{Difficulty imbalance.} For the TSP, we also observe an imbalance in performance between both classes. This is naturally explained with a look at the non-decision TSP version where a solver constructs a potential route \(\hat{\solution}\). Since \(c(\hat{\solution}) \ge c(\solution)\) such a network comes with the optimal precision of 1 by design.

\textbf{Attacking TSP Solver.} For ConvTSP five new adversarial nodes suffice to exceed an optimality gap of 2\%. Note that naive baselines such as the ``farthest insertion'' achieve an optimality gap of 2.3\% on the clean dataset~\citep{kool_attention_2019}. Moreover, for the class "route exists" we can compare the performance on the decision TSP. Even though the model performs better than DTSP, our attacks degrade the performance relatively by 10\%. In \autoref{fig:ctsp_qualitative}, we can also view the predicted routes and observe that the prediction can differ severely between the clean and perturbed problem instance.

\begin{figure}[ht]
\centering
\begin{minipage}{.51\textwidth}
    \includegraphics[width=\textwidth]{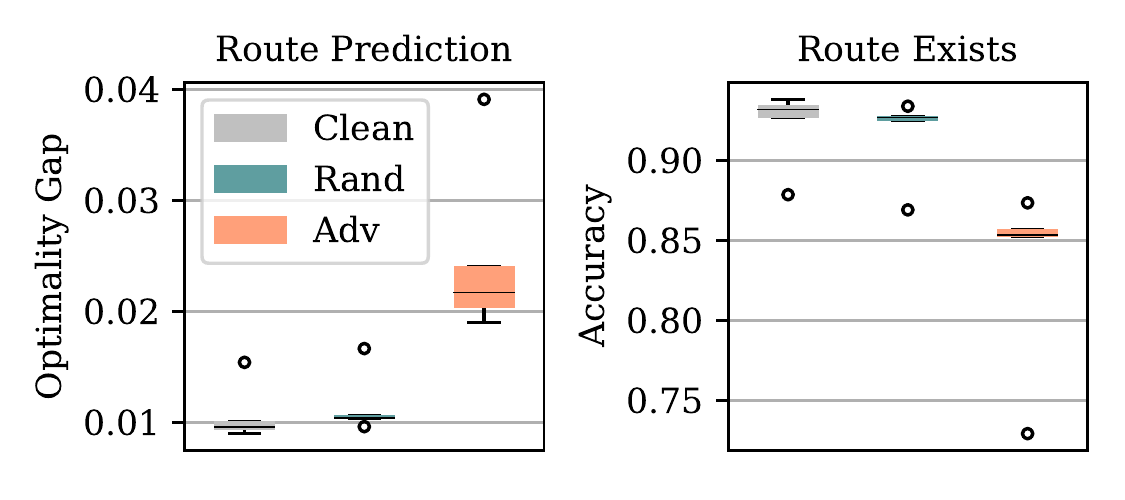}
    \caption{ConvTSP for problems with \(n = 20\) nodes and five adversarial nodes. We plot the optimality gap (left) and the decision TSP performance (right).} 
    \label{fig:ctsp_attack}
\end{minipage}
\hfill
\begin{minipage}{.455\textwidth}
    \vspace{-0.5cm}  
    \hbox{\includegraphics[width=1\textwidth]{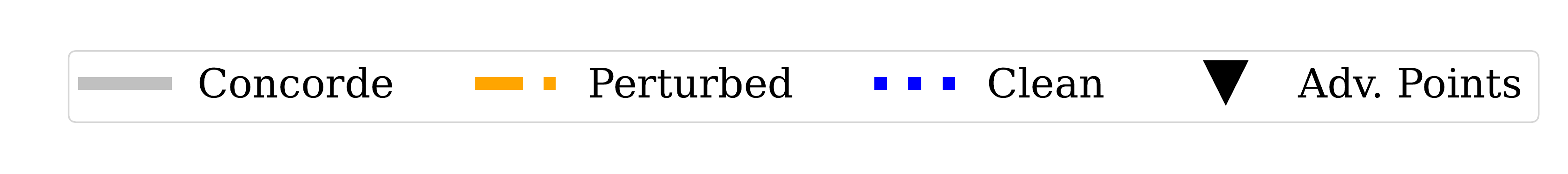}}
    \resizebox{\linewidth}{!}{
        \(\begin{array}{cc}
        \includegraphics[width=\linewidth]{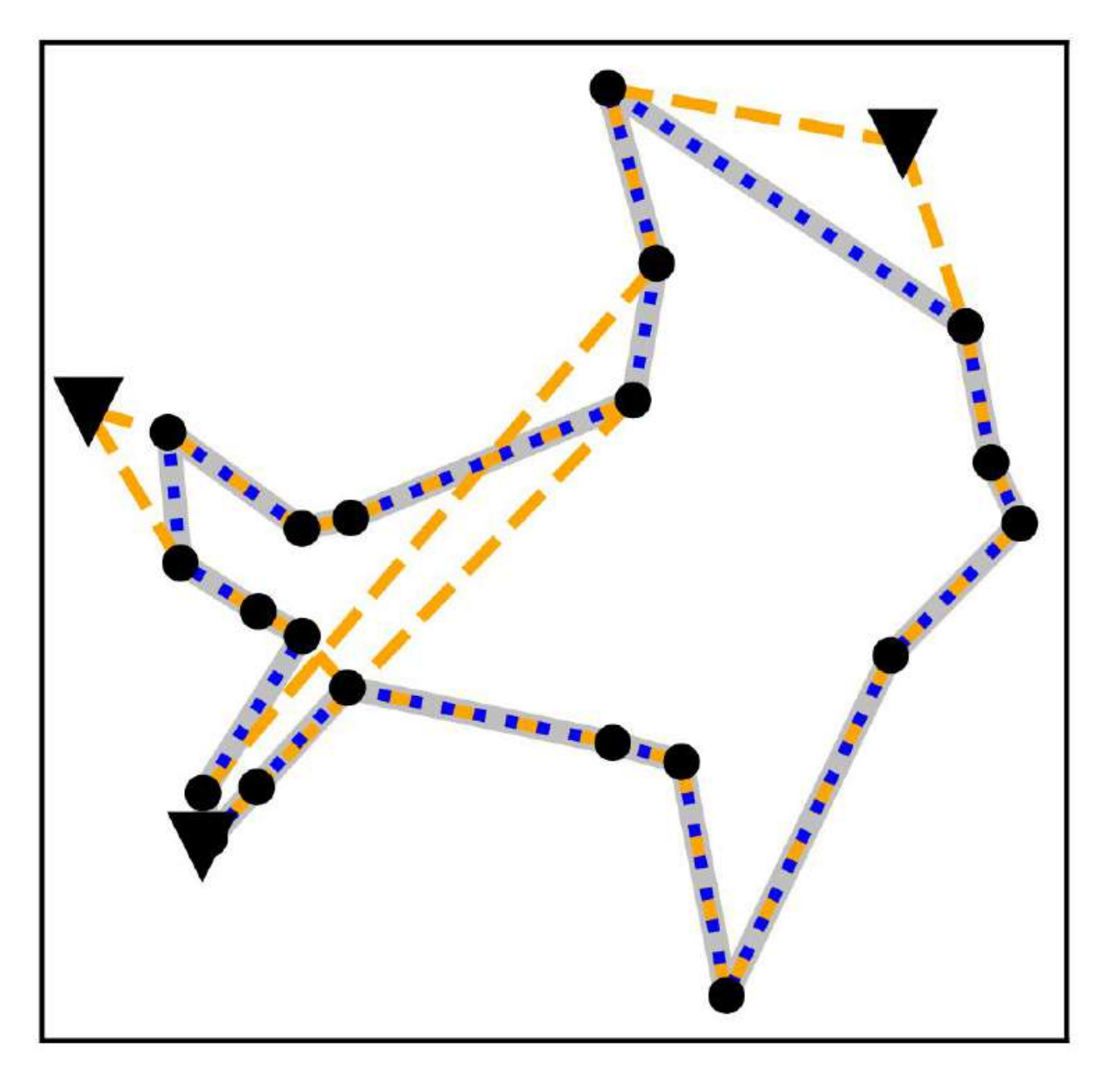} &
        \includegraphics[width=\linewidth]{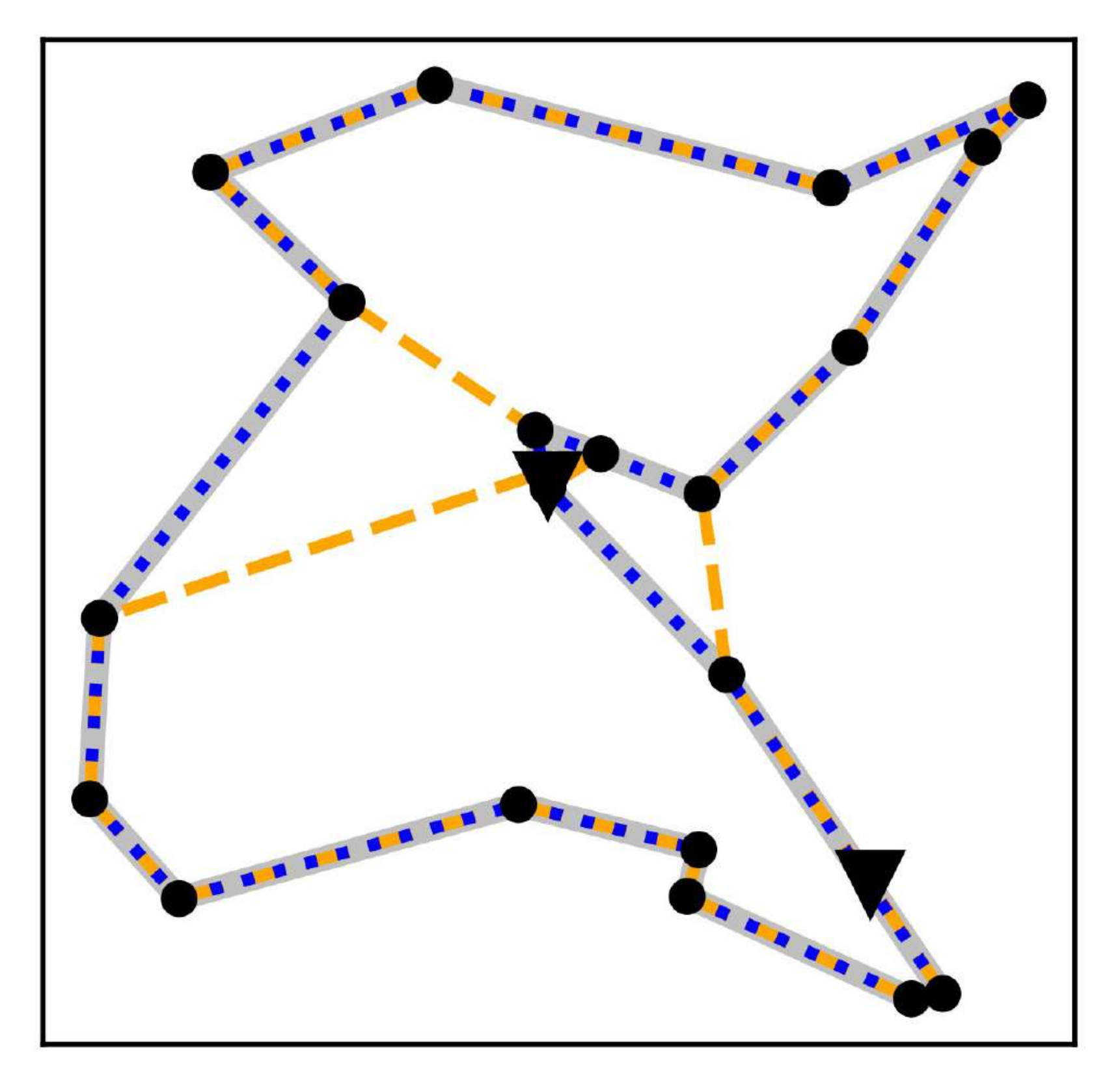} \\
        \end{array}\)
    }
    \caption{Exemplary problem instances where the attack successfully changed the optimal route for ConvTSP that show drastic changes of the prediction. \label{fig:ctsp_qualitative}}
\end{minipage}
\end{figure}

\section{Related Work}\label{sec:related_work}

\textbf{Combinatorial Optimization.} Further important works about neural SAT solvers are \citep{DBLP:conf/iclr/AmizadehMW19, yolcu2019learning, DBLP:journals/corr/abs-1909-11830, cameron2020predicting} and for neural TSP solvers \citep{khalil2017learning, policy_gradient, DBLP:journals/corr/abs-2103-03012, DBLP:journals/corr/abs-2106-04927, bello2016neural, kool_attention_2019}. We refer to the recent surveys surveys~\citep{bengio_machine_2021, cappart_combinatorial_2021, vesselinova_learning_2020} for a detailed overview and discussion.

\textbf{Generalization.} There are only very few works on generalization of neural combinatorial solvers. One exception are \citet{franccois2019evaluate} and \citet{joshi_learning_2021}. They empirically assess the impact of different model and training pipeline design choices on the generalization (for TSP) while we discuss the generation of hard model-specific instances. A work by \citet{selsam_guiding_2019} studies generalization for their NeuroSAT model~\citep{selsam_learning_2019} and proposes a data augmentation technique relying on traditional solvers. Moreover, they study a hybrid model consisting of a simplified NeuroSAT and a traditional solver. In summary, previous works about generalization of neural combinatorial solvers analyze specific tasks while we propose a general framework for combinatorial optimization and study TSP and SAT to show its importance. %

\textbf{Adversarial Robustness.} Adversarial robustness has been studied in various domains including computer vision~\citep{szegedy_intriguing_2014} and graphs~\citep{zugner_adversarial_2018, dai_adversarial_2018}. We refer to \citet{ GNNBook-ch8-gunnemann} for a broad overview of adversarial robustness of GNNs. Specifically, for TSP we optimize the continuous input coordinates (or edge weights) but use our own approach due to the non-convex constraints. For attacking the SAT model we need to perturb the discrete graph structure and rely on \(\normlzero\)-PGD by~\citet{xu_topology_2019} proposed in the context of GNNs.  %

\textbf{Hard sample mining.} Deriving adversarial examples might appear similar to hard sample mining~\citep{sung_learning_1995}. However, hard sample mining aims in spotting hard problem instances in the train data unlike we who also perturb the problem instances (of the training data or any other dataset). Moreover, if we combine augmentations with hard sample mining, we only create randomly perturbed instances and their generation is not guided by the model. The \emph{surprisingly weak} random baseline in our experiments gives an impression about how effective such an approach might be.

\section{Discussion}\label{sec:discussion}

We bring the study of adversarial robustness to the field of neural combinatorial optimization. In contrast to general learning tasks, we show that there exists a model with both high accuracy and robustness. A key finding of our work is that the assessed neural combinatorial solvers are all sensitive w.r.t.\ small perturbations of the input. For example, we can fool NeuroSAT~\citep{selsam_learning_2019} for the overwhelming majority of instances from the training data distribution with moderate perturbations (5\% of literals). We show that adversarial training can be used to improve robustness. However, strong perturbations can still fool the model, indicating a lack of expressiveness. In summary, contemporary supervised neural solvers seem to be very fragile, and adversarial robustness is an insightful research direction to reveal as well as address neural solver deficiencies for combinatorial optimization.

\clearpage

\subsection*{Reproducibility Statement}

We provide the source code and configuration for the key experiments including instructions on how to generate data and train the models. All proofs are stated in the appendix with explanations and underlying assumptions. We thoroughly checked the implementation and also verified empirically that the proposed sound perturbation models hold.

\subsection*{Ethics Statement}
Solving combinatorial optimization problems effectively and efficiently would benefit a wide range of applications. For example, it could further improve supply chain optimization or auto-routing electric circuits. Since combinatorial optimization is such a fundamental building block it is needless to detail how big of an impact this line of research could have. Unfortunately, this also includes applications with negative implications. Specifically, the examples about supply chain optimization and electric circuits also apply to military applications. Nevertheless, we believe that the positive impact can be much greater than the negative counterpart. 

Unarguably, studying adversarial robustness in the context of combinatorial optimization comes with the possibility of misuse. However, not studying this topic and, therefore, being unaware of the model's robustness imposes an even greater risk. Moreover, since we study white-box attacks we leave the practitioner with a huge advantage over a possible real-world adversary that e.g.\ does not know the weights of the model. Aside from robustness, we did not conduct dedicated experiments on the consequences on e.g.\ fairness for our methods or the resulting models. 

\bibliography{iclr2022_conference}
\bibliographystyle{customabbrvnat}

\newpage
\appendix

\counterwithin{figure}{section}
\counterwithin{table}{section}
\counterwithin{equation}{section}
\setcounter{theorem}{0}
\setcounter{proposition}{0}
\setcounter{corollary}{0}

\section{Accuracy Robustness Tradeoff for Combinatorial Solvers}\label{sec:appendix_proposition_advrisk}

If the accuracy robustness trade-off existed for neural combinatorial optimization, it would imply that it is hopeless to strive for an accurate general-purpose neural combinatorial solver. Fortunately, this is not the case. To show this we built upon the ideas of~\citet{suggala_revisiting_2019} but we only consider the special case of combinatorial decision problems where each possible prediction \(\hat{\solution}\) is either true or false (i.e.\ no stochasticity of the true label as long as we account for the ambiguity in \(\solution\)). 

For general learning tasks, \citet{suggala_revisiting_2019} refined the classical definition of adversarial robustness s.t.\ there is provably no such trade-off in general classification tasks if either the dataset is ``margin separable'' or when we restrict the perturbation space to be label-preserving for a Bayes optimal classifier. 

For combinatorial optimization with a sound perturbation model \(\perturbationmodel\) we can argue both ways. First, due to the soundness we never encourage a wrong prediction (i.e.\ the margin condition is naturally fulfilled). This is contrast to e.g.\ image classification where an instance close to the decision boundary with the common \(\normlp\) perturbation model \(\|\feat - \pertfeat\|_p \leq r\) changes its true label or becomes meaningless (e.g.\ a gray image) for a large enough \(r\). Second, the Bayes optimal classifier for combinatorial optimization solves the optimization problem perfectly on \(p(\feat)\) (with \(\text{support}(p(\feat)) \subseteq \featset\)). For these reasons, we do not need to include the Bayes optimal classifier in the definition of the attack (\autoref{eq:advattack}).

More formally, we show by contradiction that any minimizer \(f^*=\arg\min_{f \in \sF}  R_{\text{adv},\perturbationmodel}(f)\)
is optimal w.r.t.\ \(\min_{f \in \sF} R(f)\). Let \(\hat{f}\) be an optimal classifier w.r.t.\ \(R(f)\) for a combinatorial decision problem. 

We assume the loss \(\loss\) is equal for all possible optimal \(\solution\), to account for the multiple possible \(\solution\) and define the adversarial risk
\[
    R_{\text{adv},\perturbationmodel}(f) = \mathbb{E}_{\feat \sim p(\feat)} \left[ \max_{\pertfeat \in \perturbationmodel(\feat, \solution)} \loss(\model(\pertfeat), h(\pertfeat, \feat, \solution)) \right]
\]
for any data distribution \(p(\feat)\) and the standard risk \(R(f) = \mathbb{E}_{\feat \sim p(\feat)} \left[ \loss(\model(\feat), \solution) \right]\). W.l.o.g.\ we assume the best possible risk is zero \(R^* = 0\) (i.e.\ the prediction is always true).

Suppose \(f^*\) and \(\hat{f}\) differ in their prediction \(f^*(\feat) \ne \hat{f}(\feat)\) over a non-empty subset of \(\text{support}(p(\feat))\).

Since \(f^*\) is optimal and there is no stochasticity in the true label \(\solution\), it is always correct with standard risk \(R(f^*) = 0\) and, analogously, \(R_{\text{adv},\perturbationmodel}(f^*) = 0\). 

Thus, \(\hat{f}\) cannot be an optimizer of \(\hat{f}=\arg\min_{f \in \sF} R(f)\) if its predictions differs over a non-empty subset of \(\text{support}(p(\feat))\).

Note that this also holds for \(\text{support}(p(\feat)) \subseteq \text{support}(\perturbationmodel(p(\feat))) \subseteq \featset\) which is indeed the interesting case for contemporary data generators and models.

\section{Proof of \autoref{proposition:sat}}\label{sec:appendix_proposition_sat}

\begin{proposition}
    Let \(\feat = k_1(v_1, \dots, v_n) \land \dots k_m(v_1, \dots, v_n)\) be a boolean statement in Conjunctive Normal Form (CNF) with \(m\) clauses and \(n\) variables. Then \(\pertfeat\), a perturbed version of \(\feat\), has the same label \(\lab = \pertlab\) in the following cases:
    \begin{itemize}[leftmargin=3ex, itemsep=0.1ex, topsep=0.1ex]
        \item \textbf{SAT:} \(\feat\) is satisfiable \(\lab = 1\) with truth assignment \(\solution\). Then, we can arbitrarily remove or add literals in \(\feat\) to obtain \(\pertfeat\), as long as one literal in \(\solution\) remains in each clause.
        \item \textbf{DEL:} \(\feat\) is unsatisfiable \(\lab = 0\) and we obtain \(\pertfeat\) from \(\feat\) through arbitrary removals of literals, as long as one literal per clause remains.
        \item \textbf{ADC:} \(\feat\) is unsatisfiable \(\lab = 0\). Then, we can arbitrarily remove, add, or modify clauses in \(\feat\) to obtain \(\pertfeat\), as long as there remains a subset of clauses that is unsatisfiable in isolation.
    \end{itemize}
\end{proposition}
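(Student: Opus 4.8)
The plan is to reduce all three cases to two elementary monotonicity facts about CNF and then dispatch each case in one line. Fact~(i): a clause is a disjunction, hence monotone in its literal set — if an assignment satisfies a clause, it satisfies any clause obtained by adding further literals; equivalently, restricting a clause to a subset of its literals yields a logically \emph{stronger} statement. Fact~(ii): a CNF formula is a conjunction of clauses, hence logically \emph{weaker} when clauses are dropped, so if any subset of its clauses is unsatisfiable, the whole formula is unsatisfiable. I would state these as a short preliminary observation (or just inline them), keeping in mind the flip: shrinking a clause strengthens it, while shrinking the clause set weakens the formula.

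For \textbf{SAT} I would fix the given satisfying assignment $\solution$ and show it still satisfies $\pertfeat$. Writing each perturbed clause as $\tilde k_j = (k_j \setminus R_j)\cup A_j$ with $R_j$ the removed and $A_j$ the added literals, the hypothesis guarantees some literal $l^\ast\in\solution$ of $k_j$ survives, i.e.\ $l^\ast\in\tilde k_j$, and $l^\ast$ is true under $\solution$; by Fact~(i) the added literals $A_j$ can only help, so $\tilde k_j$ is true under $\solution$. Since this holds for every clause, $\solution\models\pertfeat$, hence $\pertlab=1=\lab$.

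For \textbf{DEL} no literals are added, so each perturbed clause has literal set contained in that of $k_j$, and the "one literal per clause remains" condition rules out empty clauses. For any assignment $\alpha$, if $\alpha\models\tilde k_j$ then the satisfied literal also lies in $k_j$, so $\alpha\models k_j$; thus $\alpha\models\pertfeat\Rightarrow\alpha\models\feat$, and since $\feat$ is unsatisfiable so is $\pertfeat$, giving $\pertlab=0=\lab$. For \textbf{ADC} let $S$ be the subset of clauses of $\pertfeat$ that is unsatisfiable in isolation; any $\alpha\models\pertfeat$ satisfies every clause of $\pertfeat$, in particular every clause in $S$, contradicting that $S$ is unsatisfiable. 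Hence $\pertfeat$ is unsatisfiable and $\pertlab=0=\lab$ by Fact~(ii).

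I do not expect a genuine obstacle; the content is elementary. The only thing to be careful about is keeping the two containment directions straight and, in the SAT case, using the \emph{same} fixed assignment $\solution$ throughout and verifying that arbitrarily added literals cannot break a clause $\solution$ already satisfies. I would also note the minor bookkeeping point that the SAT and DEL hypotheses forbid empty clauses, so every $\pertfeat$ produced is a well-formed CNF instance.
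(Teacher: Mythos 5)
Your proof is correct and follows essentially the same route as the paper's: fix the given assignment and use monotonicity of disjunction for \textbf{SAT}, note that deleting literals only strengthens clauses for \textbf{DEL}, and observe that an unsatisfiable subset of conjuncts forces unsatisfiability for \textbf{ADC}. The only cosmetic difference is that the paper handles possible empty clauses by convention (treating them as true), whereas you note the hypotheses already rule them out in the SAT and DEL cases; this does not change the substance of the argument.
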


\begin{proof}
    W.l.o.g.\ we assume that an empty clause is true. That is, we evaluate the expression if the empty clauses were not there. Moreover, we assume to know one possible \(\solution\) (multiple might exist).
    
    \begin{itemize}[leftmargin=3ex, topsep=0.1ex]
        \item \textbf{SAT:} Every disjunctive clause evaluates to one, if one literal is true. Since we keep at least one literal in \(\solution\) in each clause, every clause evaluates to one. The statement evaluates to one since the conjunction of true statement is also true.
        \item \textbf{DEL:} Since every clause is a disjunction of literals, the removal of one/some of the literals strictly reduces the number of specifiable assignments. That is, removing any literal \(l_i\) from its clause removes the possibility to satisfy this clause through \(l_i = 1\).
        \item \textbf{ADC:} Since the clauses are a conjunctive, all clauses need to evaluate to true. If a subset of clauses it not satisfiable by themselves (i.e.\ not all can be true at the same time), the expression necessarily resolves to 0.
    \end{itemize}
    
    Hence, all conditions in \autoref{proposition:sat} do not change the satisfiability \(\lab\), but might alter \(\solution\). It is apparent that the updated \(\solution\) can be obtained efficiently.
\end{proof}

\section{Proof of \autoref{proposition:tsp}}\label{sec:appendix_proposition_tsp}

\begin{proposition}
    Let \(\tspsolution^*\) be the optimal route over the nodes \(\sV\) in \(\sG\), let \(Z \not \in \sV\) be an additional node, and \(P, Q\) are any two neighbouring nodes on \(\tspsolution^*\). Then, the new optimal route \(\tilde{\tspsolution}^*\) (including \(Z\)) is obtained from \(\tspsolution^*\) through  inserting \(Z\) between \(P\) and \(Q\) if \(\not \exists \, (A, B) \in \sV^2 \setminus \{(P, Q)\}\) with \(A \not = B\) s.t. \(\dist(A, Z) + \dist(B, Z) - \dist(A, B) \le \dist(P, Z) + \dist(Q, Z) - \dist(P, Q)\).
\end{proposition}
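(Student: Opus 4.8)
The plan is to prove optimality directly by a ``shortcut'' argument. Write \(\delta(A,B) := \dist(A,Z) + \dist(B,Z) - \dist(A,B)\) for the cost of rerouting the edge \(\{A,B\}\) through \(Z\), and let \(\tilde{\tspsolution}\) be the route obtained by inserting \(Z\) between \(P\) and \(Q\) on \(\tspsolution^*\), so that \(c(\tilde{\tspsolution}) = c(\tspsolution^*) + \delta(P,Q)\) exactly. Two bookkeeping remarks first: since \(\dist\) is symmetric I read the exclusion \(\sV^2 \setminus \{(P,Q)\}\) as removing the unordered pair \(\{P,Q\}\) (otherwise the hypothesis is never satisfiable, as \((Q,P)\) would violate it); and I assume \(|\sV| \ge 3\), the cases \(|\sV| \le 2\) being immediate because then \(\sV \cup \{Z\}\) has at most three nodes and admits an essentially unique route.

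The core step is the shortcut map. Let \(\tilde{\tspsolution}'\) be any Hamiltonian route on \(\sV \cup \{Z\}\), and let \(A, B\) be the two neighbours of \(Z\) on it; because \(|\sV \cup \{Z\}| \ge 4\) these are distinct nodes of \(\sV\) and are not themselves adjacent on \(\tilde{\tspsolution}'\). Deleting \(Z\) and joining \(A\) to \(B\) therefore yields a Hamiltonian route \(\tspsolution'\) on \(\sV\) with \(c(\tilde{\tspsolution}') = c(\tspsolution') + \delta(A,B)\). Optimality of \(\tspsolution^*\) on \(\sV\) gives \(c(\tspsolution') \ge c(\tspsolution^*)\), hence \(c(\tilde{\tspsolution}') \ge c(\tspsolution^*) + \delta(A,B)\). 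The essential subtlety --- and the reason the hypothesis must range over \emph{all} pairs of nodes rather than only the edges of \(\tspsolution^*\) --- is that \(\tspsolution'\) need not equal \(\tspsolution^*\), so \(\{A,B\}\) can be an arbitrary pair of distinct vertices, possibly a non-edge of \(\tspsolution^*\).

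To conclude I would split on \(\{A,B\}\). If \(\{A,B\} = \{P,Q\}\), then \(c(\tilde{\tspsolution}') \ge c(\tspsolution^*) + \delta(P,Q) = c(\tilde{\tspsolution})\). If \(\{A,B\} \ne \{P,Q\}\), the hypothesis gives \(\delta(A,B) > \delta(P,Q)\), so \(c(\tilde{\tspsolution}') \ge c(\tspsolution^*) + \delta(A,B) > c(\tilde{\tspsolution})\). In either case \(\tilde{\tspsolution}\) is at least as good as \(\tilde{\tspsolution}'\), so \(\tilde{\tspsolution}\) is optimal; the strict case moreover shows that in every optimal route the neighbours of \(Z\) are exactly \(P\) and \(Q\), which is the claimed structure. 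I would close by observing that the argument is purely combinatorial and never uses the triangle inequality (that is what \autoref{corollary:tsp_geo} exploits separately), and that \autoref{corollary:tsp_multi} follows by induction: re-run the argument with \(\sV\) replaced by the vertex set already augmented by the previously inserted nodes.

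The main obstacle is conceptual rather than computational: one has to notice that the shortcut of a competing optimal route may be a suboptimal tour on \(\sV\) using an edge \(\{A,B\}\) that does not appear on \(\tspsolution^*\) at all, which is precisely what forces the ``for all pairs \((A,B)\)'' form of the hypothesis; beyond that, the only care needed is the small-\(|\sV|\) and ordered-versus-unordered-pair bookkeeping noted above.
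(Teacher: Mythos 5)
Your proof is correct and takes essentially the same route as the paper's: both rest on the shortcut identity \(c(\tilde{\tspsolution}') = c(\tspsolution') + \dist(A,Z)+\dist(B,Z)-\dist(A,B)\) for the neighbours \(A,B\) of \(Z\), the optimality of \(\tspsolution^*\) on \(\sV\), and the hypothesis to exclude every pair other than \(\{P,Q\}\); the paper merely packages this as a contradiction applied to the (unknown) optimal perturbed route, whereas you argue directly against an arbitrary competing route. Your bookkeeping remarks---reading \(\sV^2\setminus\{(P,Q)\}\) as excluding the unordered pair and disposing of the small-\(|\sV|\) cases---are sound clarifications of details the paper leaves implicit.
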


\begin{corollary}
    We can add multiple nodes to \(\sG\) and obtain the optimal route \(\tilde{\tspsolution}^*\) as long as the condition of \autoref{proposition:tsp} (including the other previously added nodes) is fulfilled.
\end{corollary}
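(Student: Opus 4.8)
The plan is to prove \autoref{corollary:tsp_multi} by induction on the number $d$ of nodes added to $\sG$, with \autoref{proposition:tsp} serving as the single-node step. Fix an order $Z_1, \dots, Z_d$ on the new nodes, set $\sV_0 = \sV$ and $\sV_i = \sV \cup \{Z_1, \dots, Z_i\}$ for $i \ge 1$, and let $\tspsolution^*_0 = \tspsolution^*$ be the given optimal route over $\sV_0$. The claim I induct on is: for each $i$, inserting $Z_i$ into $\tspsolution^*_{i-1}$ between its chosen neighbours $P_i, Q_i$ produces a route $\tspsolution^*_i$ that is optimal over $\sV_i$, provided the \autoref{proposition:tsp} condition holds for $Z_i$ relative to the instance on $\sV_{i-1}$.

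The base case $i = 0$ is trivial. For the inductive step, I apply \autoref{proposition:tsp} with the instance on vertex set $\sV_{i-1}$ and known optimal route $\tspsolution^*_{i-1}$ (available by the induction hypothesis) playing the role of the ``clean'' problem, and $Z_i$ the single new node. The key point is that \autoref{proposition:tsp} imposes no structure on the underlying vertex set beyond it being the node set of the current instance, so it applies unchanged when $\sV_{i-1}$ already contains the earlier-inserted nodes $Z_1, \dots, Z_{i-1}$. Its hypothesis, spelled out for this instance, is that no pair $(A,B) \in \sV_{i-1}^2 \setminus \{(P_i, Q_i)\}$ with $A \ne B$ satisfies $\dist(A, Z_i) + \dist(B, Z_i) - \dist(A, B) \le \dist(P_i, Z_i) + \dist(Q_i, Z_i) - \dist(P_i, Q_i)$ --- which is exactly ``the condition of \autoref{proposition:tsp} including the other previously added nodes'' from the corollary statement. \autoref{proposition:tsp} then certifies that $\tspsolution^*_i$ is optimal over $\sV_i$, completing the induction; taking $i = d$ gives $\tilde{\tspsolution}^* = \tspsolution^*_d$. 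Efficiency follows since each of the $d$ steps performs one insertion into a route and checks an inequality over $O(|\sV_{i-1}|^2)$ pairs, never calling a TSP solver.

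There is no deep obstacle here --- the statement is essentially a careful iteration of \autoref{proposition:tsp}. The one point that needs attention is bookkeeping: the corollary is a statement about a \emph{sequential} insertion procedure, so at step $i$ the condition must be checked against the enlarged ground set $\sV_{i-1}$ and with the insertion positions $P_1, Q_1, \dots, P_{i-1}, Q_{i-1}$ of the earlier nodes already fixed; it is not a claim that an arbitrary simultaneous placement of $d$ nodes is safe. Making the ordering explicit, and observing that \autoref{proposition:tsp} is agnostic to whether a vertex in the ground set is original or previously added, is all that is required for the induction to go through.
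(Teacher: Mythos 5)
Your proof is correct and follows the same route as the paper, which simply notes that the corollary ``follows by induction'' on the number of added nodes, using \autoref{proposition:tsp} as the single-insertion step with the previously inserted nodes absorbed into the current vertex set. Your write-up just makes explicit the bookkeeping (ordering the new nodes and checking the condition against the enlarged ground set $\sV_{i-1}$) that the paper leaves implicit.
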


\begin{corollary}
    For the metric TSP, it is sufficient if the condition of \autoref{proposition:tsp} holds for \((A, B) \in \sV^2 \setminus (\{(P, Q)\} \,\cup\, \sH)\)  with \(A \not = B\) where \(\sH\) denotes the pairs of nodes both on the Convex Hull \(\sH \in \text{CH}(\sV)^2\) that are not a line segment of the Convex Hull.
\end{corollary}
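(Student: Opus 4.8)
The plan is to show that for the metric TSP, any pair $(A,B) \in \sH$ — i.e., both $A$ and $B$ lie on the convex hull $\mathrm{CH}(\sV)$ but the segment $\overline{AB}$ is not an edge of the hull — can never give a better insertion cost for $Z$ than inserting $Z$ between its optimal neighbours $P,Q$ on $\tspsolution^*$, so these pairs are vacuously safe to exclude from the condition of \autoref{proposition:tsp}. The key structural fact I would invoke is that for the metric (in particular Euclidean) TSP, the optimal tour $\tspsolution^*$ visits the convex-hull vertices in their cyclic hull order. Consequently, if $A$ and $B$ are both hull vertices that are \emph{not} hull-adjacent, then $A$ and $B$ are \emph{not} adjacent on $\tspsolution^*$ either: there is at least one other hull vertex between them along the tour on each side. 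I would cite this as a classical result (Flood / the ``convex position'' property of optimal Euclidean tours) rather than reprove it.

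The main argument then reduces to: the candidate insertion ``between $A$ and $B$'' in \autoref{proposition:tsp} only makes sense / only needs to be checked when $(A,B)$ is an edge of the current route $\tspsolution^*$ — because the perturbed route is obtained by inserting $Z$ into an existing edge of $\tspsolution^*$. So I would first argue that in \autoref{proposition:tsp} it already suffices to quantify over $(A,B)$ ranging only over edges of $\tspsolution^*$ (other pairs cannot be the slot into which $Z$ is inserted in \emph{any} route that keeps the relative order of $\sV$; and more carefully, one shows the minimiser over all insertions of $Z$ into $\tspsolution^*$ is attained at some edge, and the triangle inequality guarantees $\tilde{\tspsolution}^*$ is a single-edge insertion — this is where metricity is used). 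Then, by the hull-order fact, no pair in $\sH$ is an edge of $\tspsolution^*$, hence excluding $\sH$ from the quantifier in \autoref{proposition:tsp} changes nothing, which is exactly the statement of the corollary.

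More explicitly, the steps in order: (1) State and cite the lemma that an optimal metric TSP tour traverses $\mathrm{CH}(\sV)$ in hull order; conclude that the edge set of $\tspsolution^*$ contains, among hull vertices, only hull-adjacent pairs. (2) Show, using the triangle inequality, that the optimal route $\tilde{\tspsolution}^*$ on $\sV \cup \{Z\}$ that respects the property in \autoref{proposition:tsp} is obtained by splicing $Z$ into a single edge of $\tspsolution^*$, so the relevant competitors are exactly the edges $(A,B)$ of $\tspsolution^*$ (this is essentially already inside the proof of \autoref{proposition:tsp}). (3) Combine (1) and (2): any $(A,B)\in\sH$ is not an edge of $\tspsolution^*$, hence is never the minimiser, hence can be dropped from the non-existence condition. (4) Note the same reasoning composes with \autoref{corollary:tsp_multi} for multiple inserted nodes, since each added node is spliced into an edge of the (updated) optimal route and hull vertices of $\sV$ remain in hull order.

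The step I expect to be the main obstacle is (1): pinning down the precise statement of the ``optimal tour respects convex-hull order'' property and making sure it is used correctly. The property is standard for the \emph{Euclidean} metric (it follows from a short crossing-exchange argument: if the tour visited two hull vertices out of hull order, some pair of tour edges would cross, and uncrossing them strictly shortens the tour by the triangle inequality), but one must be careful about (a) ties / non-unique optimal tours, where one only gets that \emph{some} optimal tour is in hull order — which is enough here since \autoref{proposition:tsp} only requires knowing \emph{one} optimal $\tspsolution^*$ — and (b) whether ``metric'' in the paper's sense (triangle inequality only) is enough, or whether true Euclidean/planar structure is needed for the hull and the non-crossing argument; the corollary as stated speaks of a Convex Hull, so I would read ``metric TSP'' here as the planar/Euclidean case and say so explicitly. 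Everything else (the triangle-inequality splicing argument in step (2)) is routine and already implicit in the proof of \autoref{proposition:tsp}.
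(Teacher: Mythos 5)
There is a genuine gap, and it sits in your step (2). You claim that the quantifier in \autoref{proposition:tsp} can be restricted to the edges of the \emph{old} optimal route \(\tspsolution^*\), on the grounds that ``the triangle inequality guarantees \(\tilde{\tspsolution}^*\) is a single-edge insertion'' into \(\tspsolution^*\). That claim is both circular and false. Circular, because single-edge-insertion optimality is precisely the \emph{conclusion} that the condition of \autoref{proposition:tsp} is designed to guarantee; you cannot assume it in order to weaken the condition. False, because in general (even in the Euclidean plane) the optimal tour on \(\sV \cup \{Z\}\) is not an insertion of \(Z\) into an edge of an optimal tour on \(\sV\): the pair \((R,S)\) of \(Z\)'s neighbours in \(\tilde{\tspsolution}^*\) need not be tour-adjacent in \(\tspsolution^*\) at all. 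This is exactly why the paper's proof of \autoref{proposition:tsp} quantifies over \emph{all} pairs \((A,B)\) (``since we do not know what edges are contained in \(\tilde{\tspsolution}^*\) \dots we state the stricter condition''), and why the corollary still retains pairs that are not edges of \(\tspsolution^*\); if your step (2) were sound, the condition could be shrunk to the \(n\) tour edges, a far stronger statement than \autoref{corollary:tsp_geo} which the paper does not (and could not) make.

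The paper's own argument goes through the \emph{perturbed} tour instead: by the intersection theorem \citep{cutler_efficient_1980}, a consequence of the triangle inequality, the optimal route \(\tilde{\tspsolution}^*\) is a simple polygon (non-crossing), hence visits the convex-hull vertices in hull order; consequently \(Z\)'s neighbours \((R,S)\) in \(\tilde{\tspsolution}^*\) can never be two hull vertices that do not form a hull edge, so the pairs in \(\sH\) can be dropped from the non-existence condition. Your step (1) identifies essentially the same structural fact (hull-order traversal is equivalent to non-crossing here, and your remarks about ties and about reading ``metric'' as Euclidean/planar are well taken), but you apply it to \(\tspsolution^*\) and then try to bridge to \(\tilde{\tspsolution}^*\) via the single-edge-insertion claim. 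To repair the argument, apply the non-crossing/hull-order property directly to \(\tilde{\tspsolution}^*\) and to the role of \((R,S)\) in the contradiction step of the proof of \autoref{proposition:tsp}; steps (2)--(3) as written do not close the gap.
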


\begin{proof}
    \textbf{Proof.} We proof by contradiction. We define \((R, S) \in \sV^2 \setminus \{(P, Q)\}\) to be the two neighboring nodes of \(Z\) on \(\tilde{\tspsolution}^*\). Suppose \(\dist(P, Z) + \dist(Q, Z) - \dist(P, Q) < \dist(R, Z) + \dist(S, Z) - \dist(R, S)\) and \(\tilde{\tspsolution}^*\) would not contain the edges \(\dist(P, Z)\) as well as \(\dist(Q, Z)\). 
    
    We know by optimality of \(\tspsolution^*\) that
    \[
    c(\tilde{\tspsolution}^*) - \dist(R, Z) - \dist(S, Z) + \dist(R, S)  \ge c(\tspsolution^*)
    \] 
    and by optimality of \(\tilde{\tspsolution}^*\) that
    \[
    c(\tspsolution^*) + \dist(P, Z) + \dist(Q, Z) - \dist(P, Q) \geq c(\tilde{\tspsolution}^*).
    \] 
    Thus, \[
    c(\tspsolution^*) + \dist(P, Z) + \dist(Q, Z) - \dist(P, Q) \ge c(\tilde{\tspsolution}^*) \ge c(\tspsolution^*) + \dist(R, Z) + \dist(S, Z) - \dist(R, S) \] 
    and equivalently
    \[
        \dist(P, Z) + \dist(Q, Z) - \dist(P, Q) \ge \dist(R, Z) + \dist(S, Z) - \dist(R, S)
    \]
    which leads to a contradiction.
    
    Since we do not know what edges are contained in \(\tilde{\tspsolution}^*\) (i.e.\ what nodes could be \(R\) and \(S\)) we state the stricter condition \(\not \exists \, (A, B) \in \sV^2 \setminus \{(P, Q)\}\) with \(A \not = B\) s.t. \(\dist(A, Z) + \dist(B, Z) - \dist(A, B) \le \dist(P, Z) + \dist(Q, Z) - \dist(P, Q)\).
\end{proof}

If multiple \(\tspsolution^*\) exist, then this statement holds for any optimal route that has a direct connection between \(P\) and \(Q\). \autoref{corollary:tsp_multi} follows by induction and \autoref{corollary:tsp_geo} is due to the fact that the in metric space the optimal route \(\tilde{\tspsolution}^*\) must be a simple polygon (i.e.\ no crossings are allowed). This was first stated for an euclidean space as ``the intersection theorem'' by \citet{cutler_efficient_1980} and is a direct consequence of the triangle inequality. Also note, alternatively to the proof presented here, one can also unfold the dynamic program proposed by \citet{rote_n-line_1991} to end up at \autoref{proposition:tsp}.

\newpage
\section{Projected Gradient Descent (PGD)}\label{sec:appendix_pgd}

\begin{wrapfigure}[14]{l}{0.6\textwidth}
    \begin{algorithm}[H]
    \DontPrintSemicolon
    \SetAlgoLined
    \LinesNumbered
      \relax
      \caption{Projected Gradient Descent}
      \label{algo:pgd}
        \KwData{Problem \((\feat, \solution)\) and possibly \(\lab\), Solver \(\model(\cdot)\), Loss \(\loss\), budget\ \(\Delta\), attack steps \(s\), learn rate\ \(\alpha\)}
        \(\pertfeat_0 \leftarrow \text{initialize}(\feat, \solution, \Delta)\)\;
        
        \For{\(t \in \{0,1, \dots, s-1\}\)}{
            \(\pertfeat'_{t+1} \leftarrow \text{update}(\pertfeat_{t}, \alpha_{t}, \nabla \loss (\model(\pertfeat_{t}), h(\pertfeat, \feat, \solution)))\)\;
            \(\pertfeat_{t+1} \leftarrow \text{project} (\pertfeat_{t+1}', \feat, \Delta)\)\;
        }
        
        \(\pertfeat \leftarrow \text{postprocess}(\pertfeat_E, \Delta)\)\;
        
        \Return \(\pertfeat, h(\pertfeat, \feat, \solution)\)
    \end{algorithm}
\end{wrapfigure}
PGD is one of the most successful and widely studied approaches to craft adversarial examples. For a further techniques and a broader overview of adversarial robustness in various domains, we refer to  \citet{xu_adversarial_2020}. 

All our attacks roughly match the framework of \autoref{algo:pgd}. First, we initialize the perturbed instance, or, alternatively, we can use some variable that models the difference to the clean instance \(\feat\) (line 1). Initialization strategies that we consider are random initialization or initializing to the clean instance. Then we perform the attack for \(s\) steps and in each step update the perturbed instance through a gradient descent step (line 3). For faster convergence we additionally use Adam as optimizer~\citep{kingma_adam_2015}. After the gradient update we perform a projection step that ensures we stay within the budget \(\Delta\) or satisfy other constraints. For simplicity, we omit the fact that we use early stopping in all our algorithms. Specifically, in each attack step we check if the current perturbed instance \(\pertfeat_{t+1}\) comes with the best loss so far. Then, after \(s\) attack steps we assign the best possible \(\pertfeat_s \leftarrow \arg \max_{t \in \{1, \dots, s\}} \loss (\model(\pertfeat_{t}), h(\pertfeat_{t}, \feat, \solution)) \). This happens right before we (optionally) perform a postprocessing. Finally, we return the perturbed instance \(\pertfeat\) with solution \(\solution\) or decision label \(\lab\).

\textbf{Limitations.} As discussed in~\autoref{sec:advrobustness}, we require the model to be differentiable w.r.t.\ its input. Fortunately, most neural combinatorial solvers rely on GNNs and therefore this does not impose an issue. However, even if assessing a non-differentiable model one could revert to derivative-free optimization~\citep{yang_derivative-free_2021}.

For some combinatorial optimization problems the number of variables we need to optimize over can be very large. For example, when attacking a Maximum Independent Set neural solver \citep{li2018combinatorial} using our perturbation models for SAT, we need to construct a graph that blows up quickly. Nevertheless, this could be done with derivative-free optimization~\citep{yang_derivative-free_2021} or a scalable variant of of \(\normlzero\)-PGD called projected randomized block coordinate descent~\citep{geisler_attacking_2021}.

\section{SAT Attack Details}\label{sec:appendix_sat_details}

\textbf{SAT model description.} \citet{selsam_learning_2019} propose to model the input problem as a bipartite graph as described in \autoref{sec:sat_solver}. We instantiate the model as described in their paper: over 26 message passing steps the GNN updates its embeddings of size 128. Clause nodes and literal nodes have separate LSTMs to update their embeddings with messages produced by 3-layer MLPs. After the last message passing steps, the output vote on whether the problem is satisfiable or not is obtained by transforming the clause embeddings with an additional vote-MLP and averaging over the final votes of the literal nodes. The paper also notes that the number of message passing steps has to be adapted to the problem size. As no specific values are provided, we use 64 steps for the50-100 dataset and 128 steps for the SATLIB and uni3sat data. \\

\citet{selsam_learning_2019} trained their model in a single epoch on a large dataset consisting of ``millions of samples''. However, since \citet{selsam_learning_2019} did not publish their dataset we used a total of 60,000 samples with the very same data generation strategy. We then use 50,000 of the samples to train the model for 60 epochs. With this strategy we closely match the reported performance. For the larger train set in \autoref{tab:sat_advtrain} we generate an additional 5,000 samples. During training we use the same hyperparameters as described in the paper. Please use the referenced code for exactly reproducing the dataset.

\clearpage
\textbf{Optimization Problem.} We restate the optimization problem for an adversarial attack on a SAT decision problem. The \textbf{SAT} attack tries to find a perturbed instance \(\pertfeat\) that maximizes the loss s.t. every clause contains at least one literal $l_j$ that is present in the solution assignments \(\solution = \{l_1^*, \dots, l_n^*\ \,|\,l_i^* \in \{\neg v_i, v_i\}\}\). Here we omit that we additionally constrain the number of inserted/removed literals relatively to the number of literals in the clean instance \(\feat\) via budget \(\Delta\).
\begin{equation}\label{formula:satopt}
\begin{split}
    \max_{\pertfeat} \loss(\model(\pertfeat), \lab = 1) \st \forall \tilde{k}_i \in \pertfeat: (\exists l_j \in \tilde{k}_i~\text{with}~ l_j=l^*_j)
\end{split}
\end{equation}

The \textbf{ADC} attack maximizes the loss by adding clauses to the problem, meaning that every clause $k_i$ in the original problem $\feat$ has to be present also in the perturbed instance $\pertfeat$, as these clauses ensure that the problem remains unsatisfiable:
\begin{equation}\label{formula:satoptadc}
\begin{split}
    \max_{\pertfeat} \loss(\model(\pertfeat), \lab = 0) \st \forall k_i \in \feat: k_i \in \pertfeat
\end{split}
\end{equation}

Lastly, the \textbf{DEL} attack optimizes over what literals to delete from the problem's clauses, as long as no clause is removed completely. This results in the following optimization problem: 
\begin{equation}\label{formula:satoptdel}
\begin{split}
    \max_{\pertfeat} \loss(\model(\pertfeat), \lab = 0) \st \forall \tilde{k}_i \in \pertfeat: \tilde{k}_i \subseteq k_i \quad \land  \quad \operatorname{nonempty}(\tilde{k}_i) 
\end{split}
\end{equation}

\textbf{Attack Details.} The attack on SAT problems modifies the literals that are contained in clauses. This means specifically that we optimize over the edges represented by the literals-clauses adjacency matrix \(\feat = \mA \in \{0, 1\}^{2n \times m}\). We implemented these attacks such that they can operate on batches of problems, however we omit this at this point in the following for an improved readability. 

\autoref{algo:nsat1} describes in detail \textbf{SAT} and \textbf{DEL} (see \autoref{proposition:sat}). The difference of the adjacency matrix $\mA$ to the adversarially perturbed version \(\tilde{\mA}\) is modelled via the perturbation matrix $\mM$: (\(\pertfeat = \tilde{\mA} = \mA \oplus \mM\)). We then optimize over $\mM$. During the \textbf{SAT} attack, we allow deletions and additions of edges, as long as one solution-preserving truth assignment per clause, described by the indicator $\mT = \text{onehot}(\solution)$, is preserved (line 7). For the \textbf{DEL} attack only deletions are allowed, under the constraint that no clause can be fully deleted (lines 9 \& 10). Additionally, a global budget is enforced (line 6). For details on the budget as well as other hyperparameters of the attack, we refer to \autoref{tab:nsat_attack_hps}. For the \textbf{ADC} attack described in \autoref{algo:nsat2}, the perturbation matrix $\mM \in \{0, 1\}^{2n \times \tilde{m}}$ describes additional clauses appended to the original problem (line 4). The attack can freely optimize over $\mM$ but the number of literals/edges. \(\Delta\) is enforced s.t.\ on average each clause contains as many literals as a clause in \(\mA\)  (line 6).

\begin{table}[hb]
\centering
\caption{Hyperparameters for the attacks on NeuroSAT proposed in \autoref{sec:sat}}\label{tab:nsat_attack_hps}
\begin{tabular}{c c c c} 
 \toprule
   & \textbf{SAT} & \textbf{DEL} & \textbf{ADC} \\ [0.5ex] 
 \midrule
 attack steps & 500 & 500 & 500 \\ 
 learning rate & 0.1 & 0.1 & 0.1 \\
 fraction of perturbed literals \(\Delta\) & 5\% of edges & 5\% of edges & 25\% of clauses \\
 \# final samples & 20 & 20 & 20 \\
 temperature scaling & 5 & 5 & 5 \\ [1ex] 
 \bottomrule
\end{tabular}
\end{table}

Because the attack optimizes over a set of discrete edges with a gradient based method, similarly to \citep{xu_topology_2019}, we relax the edge weights to $[0, 1]$ during the attack. Before generating the perturbed problem instance \(\tilde{\mA}\), we sample the discrete \(\mM'\) from a Bernoulli distribution where the entries of the matrix $\mM$ represent the probability of success. In contrast to \citep{xu_topology_2019} we sample 19 instead of 20 times but add an additional sample that chooses the top elements in \(\mM\). Thereafter, we take the sample that maximizes the loss. Moreover, our projection differs slightly from the one proposed by \citet{xu_topology_2019} since we iteratively enforce the budget instead of performing a bisection search.

\begin{minipage}{.45\linewidth}
\hspace{0.5cm}
\begin{algorithm}[H]
\DontPrintSemicolon
\LinesNumbered
    \SetAlgoLined
    \caption{SAT \& DEL Attack}
     \label{algo:nsat1}
    \KwData{Adjacency $\mA \in \{0, 1\}^{2n \times m}$, edge budget $\Delta$, steps $s$, learning rate $\alpha$, solution $\mT \in \{0, 1\}^{2n \times m}$, SAT model $\model$, label $y$}
    \KwResult{Perturbed Adjacency $\tilde{\mA}$}
    Initialize $\mM \gets \textbf{0}^{2n \times m}$, \;  
    
    \For{\(t \in \{0,1, \dots, s-1\}\)}{
        \lIf{y}{
            $\tilde{\mA} \gets \mA \oplus \mM$ 
            }
        \lElse{
            $\tilde{\mA} \gets \mA - \mM$
        }

        $\mM \gets \text{update}(\mM, \alpha, \nabla \loss(\model(\tilde{\mA}), \lab)$ \;
        
        $\mM \gets$ project-budget($\mM, \Delta$) \;
        
        \lIf{y}{
            $\mM \gets \mM * \mT$ 
            }
        \Else{
            $\mM \gets$ $\mM * \mA$ \;
        
            $\mM \gets$ ensure-no-del($\mM$) \;
        }
        
    }
    $\mM' \gets $ sample($\mM$)   \;
    
    $\tilde{\mA} \gets \mA \oplus \mM'$
\end{algorithm}
\end{minipage}
\begin{minipage}{0.45\linewidth}
\hspace{0.6cm}
\begin{algorithm}[H]
\DontPrintSemicolon
\LinesNumbered
    \SetAlgoLined
    \caption{ADC Attack}
     \label{algo:nsat2}
    \KwData{Adjacency $\mA \in \{0, 1\}^{2n \times m}$, clause budget $\omega$, steps $s$, learning rate $\alpha$, SAT model $\model$}
    \KwResult{Perturbed Adjacency $\tilde{\mA}$}
    Initialize $\mM \gets \textbf{0}^{2n \times \tilde{m}}$, \; 
    
    $\Delta \gets \text{avg(}\mA,  \text{axis} = 0) * \tilde{m}$\; 
    
    \For{\(t \in \{0,1, \dots, s-1\}\)}{
        $\tilde{\mA} \gets $append($\mA, \mM$) \;

        $\mM \gets \text{update}(\mM, \alpha, \nabla \model(\tilde{\mA}), \lab))$ \;
        
        $\mM \gets$ project-budget($\mM, \Delta$) \;
        
    }
    $\mM' \gets $ sample($\mM$)   \;
    
    $\tilde{\mA} \gets $append($\mA, \mM'$) \;
\end{algorithm}
\end{minipage}

\textbf{Computational complexity.} All three attacks operate for \(s\) steps. Under the assumption that the models are linear w.r.t.\ the number of edges in the graph representation of the problem (for forward and backward pass), each step has a time complexity of \(\mathcal{O}(m n)\). Therefore, the overall attack has a time complexity of \(\mathcal{O}(n m s)\). Under the same assumption for the memory requirements of \(\nabla_{\mM} \loss\), the total space complexity is \(\mathcal{O}(n m)\).

\section{TSP Attack Details}\label{sec:appendix_tsp_details}

For simplicity we only discuss the case of metric TSP. This is also what the used neural sovers are mainly intended for. The TSP attack is actually implemented in a batched fashion. However, for an improved readability we omit the details here. We refer to table \autoref{tab:tsp_attack_hps} for details on the attack hyperparameters.

\textbf{DTSP model. } The DTSP model employs a GNN to predict whether there exists a route of certain cost on an input graph \citep{Prates2019}. The model sequentially updates the embeddings of both nodes and edges that were initialized by a 3-layer MLP with the coordinates as input. After multiple message passing steps, the final prediction is then based on the aggregation of the resulting edge embeddings. We follow the training guidelines described in the paper and train on $2^{20}$ graph samples. The graphs are generated by sampling from the unit square ($\feat \in [0, 1] ^ {n \times 2}$) and solutions are obtained with the concorde solver \citep{concorde}. Dual problem sets are built from each graph for training, validation and test data by increasing and decreasing the true cost by a small factor.

\textbf{ConvTSP model.} The ConvTSP model aims at predicting the optimal route \(\solution = \tspsolution\) over a given graph via a GCN \citep{joshi_efficient_2019}. It predicts a probability map over the edges indicating the likelihood of an edge being present in an optimal solution. These lay the basis for different solution decoding procedures. We employ their greedy search method, where the graph is traversed based on the edges with the highest probabilities. We follow the training procedure described in the paper as well as the data generation technique. Data is generated the same way as for the DTSP model, however the target represents binary indicators whether an edge is present in the optimal solution. During training, the binary cross entropy between the target and the probability maps over the edges is minimized, and a solution-decoding technique is only applied during inference.

\textbf{Optimization problem.} We again restate the optimization problem from \autoref{eq:advattack} for the specific case of TSP. We add $\Delta$ nodes to the original problem $\feat \in [0, 1]^{n \times 2}$ to create a larger, perturbed problem instance  $\feat \in [0, 1]^{(n+\Delta) \times 2}$ which maximizes the loss:
\begin{equation}\label{formula:tspopt}
\begin{split}
    \max_{\pertfeat} \loss(\model(\pertfeat), \solution ) \st \forall \tilde{x}_i ~\text{with} ~i > n ~\text{\autoref{proposition:tsp} holds} 
\end{split}
\end{equation} \\

\textbf{Attack details.} The input of the TSP is represented by the coordinates \(\feat \in [0, 1]^{n \times 2}\) for the \(n\) nodes. Additionally, we know the near-optimal route \(\solution\)  obtained with the Concorde solver which we use as ground truth. During the TSP attack we add additional adversarial nodes $\mZ \in [0, 1]^{\Delta \times 2}$ to the input problem $\feat$. The adversarial nodes are initialized by randomly sampling nodes until they fulfill the constraint from \autoref{proposition:tsp} (line 1). For the attack, as described in \autoref{algo:tsp}, we operate solely on the coordinates of the adversarial nodes and assume that the model converts the coordinates into a weighted graph. For the DTSP model, we additionally calculate the updated route cost $c_0$ (accordingly to \(\lab\)) and append it to the input. For simplicity, we omitted this special case in \autoref{algo:tsp}. Moreover, for the DTSP we do not pass \(\pertsolution\) to the loss; instead, we use the decision label \(\lab\) that is kept constant throughout the optimization. %
We then obtain the updated coordinates \(\mZ\) for the perturbed solution \(\pertsolution\) (lines 5-6). In the project step (line 7), we only consider the coordinates in \(\mZ\) that violate the constraint. As discussed in \autoref{sec:tsp_decision_solver}, we perform gradient descent on the constraint due to its non-convexity. We decide against optimizing the Lagrangian since this would require evaluating the neural solver \(\model\) and, therefore, is less efficient. For the projection, we update the adversarial node \(i\) that violates the constraint with \[\mZ_i \leftarrow \mZ_i - \eta \nabla_{\mZ_i} \left[ \dist(P, \mZ_i) + \dist(Q, \mZ_i) - \dist(P, Q) - \left( \min_{A,B \in \feat} \dist(A, Z) + \dist(B, Z) - \dist(A, B) \right) \right]\] until the constrain is fulfilled again but for at most three consecutive steps.

\begin{figure}[ht]
  \centering
\begin{minipage}{.5\linewidth}
\hspace{0.6cm}
\begin{algorithm}[H]
\DontPrintSemicolon
    \LinesNumbered
    \SetAlgoLined
    \KwData{Node Coords $\feat \in [0, 1]^{n \times 2}$, steps $s$, learning rate $\alpha$, TSP model \(\model\), Adversarial Coords $\mZ \in [0, 1]^{\Delta \times 2}$}
    \KwResult{Perturbed Node Coords $\pertfeat \in [0, 1]^{(n + \Delta) \times 2}$}
    Initialize $\mZ \gets \text{random-allowed-point}()$
    
    \For{\(t \in \{0,1, \dots, s-1\}\)}{
        \(\text{mask} \gets \text{is-constraint-fulfilled}(\mZ, \feat)\) \;
        
        $\pertfeat \gets$ append($\feat, \mZ[\text{mask}]$) \;
        
        $\pertsolution \gets \text{update-solution}(\pertfeat, \solution)$

        $\mZ \gets \text{update}(\mZ, \alpha,  \nabla\loss(\model(\mW), \pertsolution))$ \;
        
        $\mZ \gets \text{project}(\mZ, \feat)$;
        
    }
    
    \Return $\pertfeat, \pertsolution$ \;
\caption{TSP Attack}\label{algo:tsp}
\end{algorithm}
\end{minipage}
\end{figure}

\begin{table}
\centering
\caption{Hyperparameters for the attacks on TSP proposed in \autoref{sec:tsp}}\label{tab:tsp_attack_hps}
\begin{tabular}{c c c} 
 \toprule
   & \textbf{DTSP} & \textbf{ConvTSP} \\ [0.5ex] 
 \midrule
 maximum number of adv. nodes \(\Delta\) & 5 & 5\\ 
 attack steps & 200 & 500\\ 
 learning rate & 0.001 & 0.01 \\
 gradient project learning rate & 0.002 &  0.002\\
gradient project steps & 3 & 3 \\
 \bottomrule
\end{tabular}
\end{table}

\textbf{Computational complexity.} We again assume that the model has a linear time and space complexity. This time it is linear w.r.t.\ the \(\mathcal{O}(n^2)\) number of elements in the distance matrix between the input coordinates. The most costly operation we add, is the check if the constraint is violated or not for each of the potentially added nodes \(\Delta\). This operation has a time and space complexity of \(\mathcal{O}(n^2)\) since we need to evaluate the distances to all nodes (except the special case where both nodes are on the convex hull but are non-adjacent, see \autoref{corollary:tsp_geo}). However, checking the constraint has the same complexity as the neural solver. Therefore, the overall space complexity turns out to be \(\mathcal{O}(\Delta n^2) = \mathcal{O}(n^2)\) and the time complexity is \(\mathcal{O}(n^2 s)\) with the number of attack steps \(s\).

\section{Off-The-Shelf SAT Solvers: Time Cost of Perturbed Instances}\label{sec:appendix_sat_time_cost}

\textbf{Comparison SAT Solvers.} We test two SAT-solver's runtime on all attacks for both clean an perturbed samples from the 50-100 dataset to better understand the difficulty of the samples for non-neural SAT solvers. \autoref{tab:solvers} shows the mean as well as the standard deviation of the Glucose \citep{Glucose_solver} as well as the MiniSAT \citep{minisat} solver. The results suggest that both solvers can more quickly find that a sample is unsatisfiable for perturbed samples from both the \textbf{DEL} and the \textbf{ADC} attacks. For perturbed satisfiable samples, the runtime increases compared to the clean sample for the MiniSAT solver while Glucose solver needs about the same time as for the clean sample to find a solution. 
\begin{table}[]
    \centering
    \caption{Runtime in milliseconds for the Glucose and MiniSAT solver on 100 clean and perturbed problem instances 
    from 50-100}
    \label{tab:solvers}
    \begin{tabular}{lcccc}
    \toprule
    {}   & \textbf{Perturbed}   &  \textbf{DEL} &  \textbf{ADC} &  \textbf{SAT} \\
    \midrule
    \multirow{2}{*}{Glucose}    &                  &        \textbf{0.1897  $\pm$  0.20} &        \textbf{0.1944  $\pm$  0.13} &  0.1315  $\pm$ 0.06 \\
                                & \(\checkmark\)   &        0.0335 $\pm$ 0.03 &        0.0816  $\pm$  0.11 &  \textbf{0.1391 $\pm$ 0.08} \\
    \multirow{2}{*}{MiniSAT}    &                  &       \textbf{0.1351 $\pm$  0.07} &        \textbf{0.1510  $\pm$  0.11} &  0.0980 $\pm$ 0.05 \\
                                & \(\checkmark\)   &        0.0189 $\pm$ 0.02 &        0.0587 $\pm$   0.09 &  \textbf{0.1383 $\pm$ 0.31} \\
    \bottomrule
    \end{tabular}
\end{table}

\section{Convergence NeuroSAT}\label{sec:appendix_sat_convergence}

Complementary to \autoref{fig:sat_qualitative}, we also present the convergence for 100 randomly chosen satisfiable instances from the 10-40 dataset. We see that the observations drawn from \autoref{fig:sat_qualitative} also hold here: (1) rarely an instance is predicted as SAT despite the moderate budget of perturbing 5\% of the literals and (2) if NeuroSAT identifies the sample as satisfiable it requires more message passing steps to do so.

\begin{figure}[H]
\centering
\includegraphics[width=0.5\textwidth]{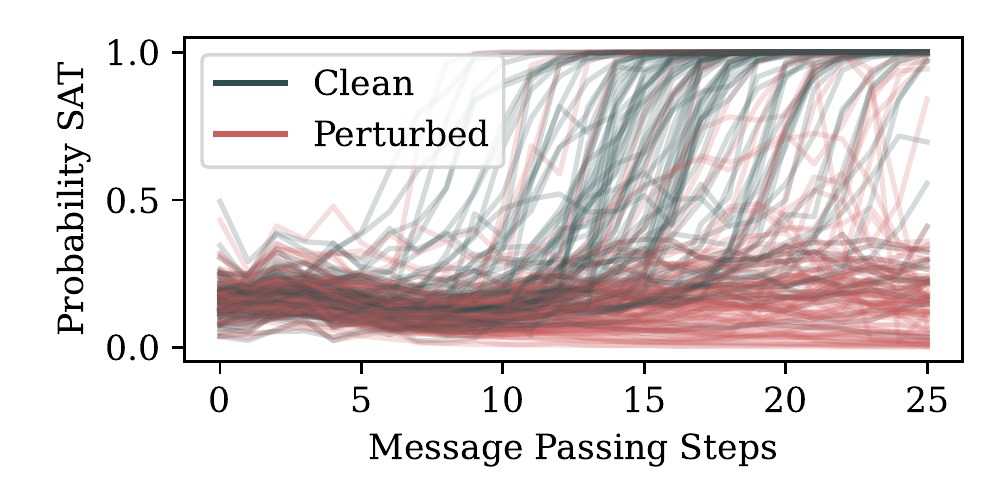}
\caption{Clean vs.\ perturbed: NeuroSAT's prediction over the message passing steps.} 
\label{fig:appendix_sat_qualitative}
\end{figure}

\section{Efficiency NeuroSAT Attacks}\label{sec:appendix_sat_step}
To better understand the efficiency of the NeuroSAT attacks and how many gradient update steps are needed to flip the model's prediction, we show additional results for the attack strength over several attack step budgets $s$ in \autoref{fig:appendix_sat_steps}. The indicated number of gradient update steps $s$ are taken during the attack, while retaining the learning rate from \autoref{tab:nsat_attack_hps} and early stopping on an instance level. The results in \autoref{fig:appendix_sat_steps} show that only 1 gradient update step suffices for the \textbf{SAT} attack to decrease the accuracy from 79\% to 26\%. Compared to the results from \autoref{fig:sat_attacks}, where the random attack draws a single sample and only decreases the accuracy to 74\%, the importance of the guidance through the gradient update becomes apparent. For the \textbf{DEL} attack on the NeuroSAT model, the attack strength converges around 200 steps, showing again the imbalance between the labels with regards to how many misclassifications the attack can force.

\begin{figure}[hb]
    \resizebox{\linewidth}{!}{
    \(\begin{array}{cc}
      \subfloat[\textbf{SAT} Attack]{\includegraphics[width=0.5\linewidth]{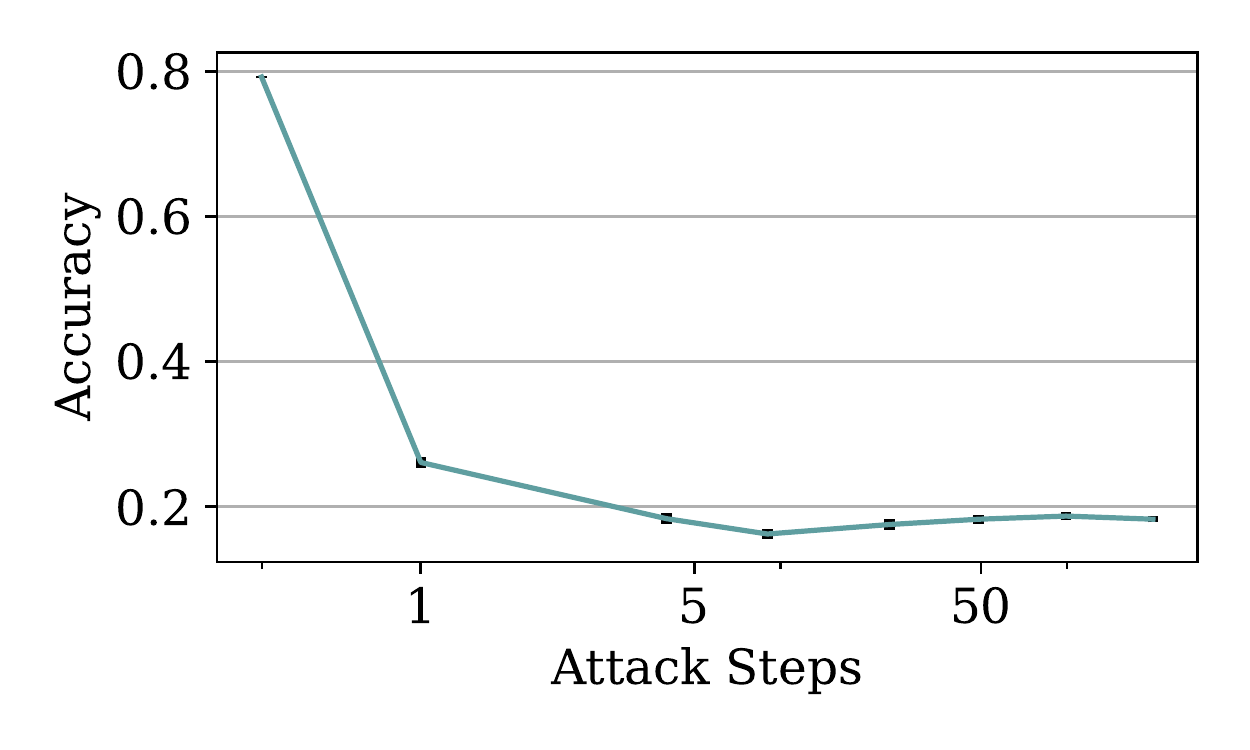}} &
      \subfloat[\textbf{DEL} Attack]{\includegraphics[width=0.5\linewidth]{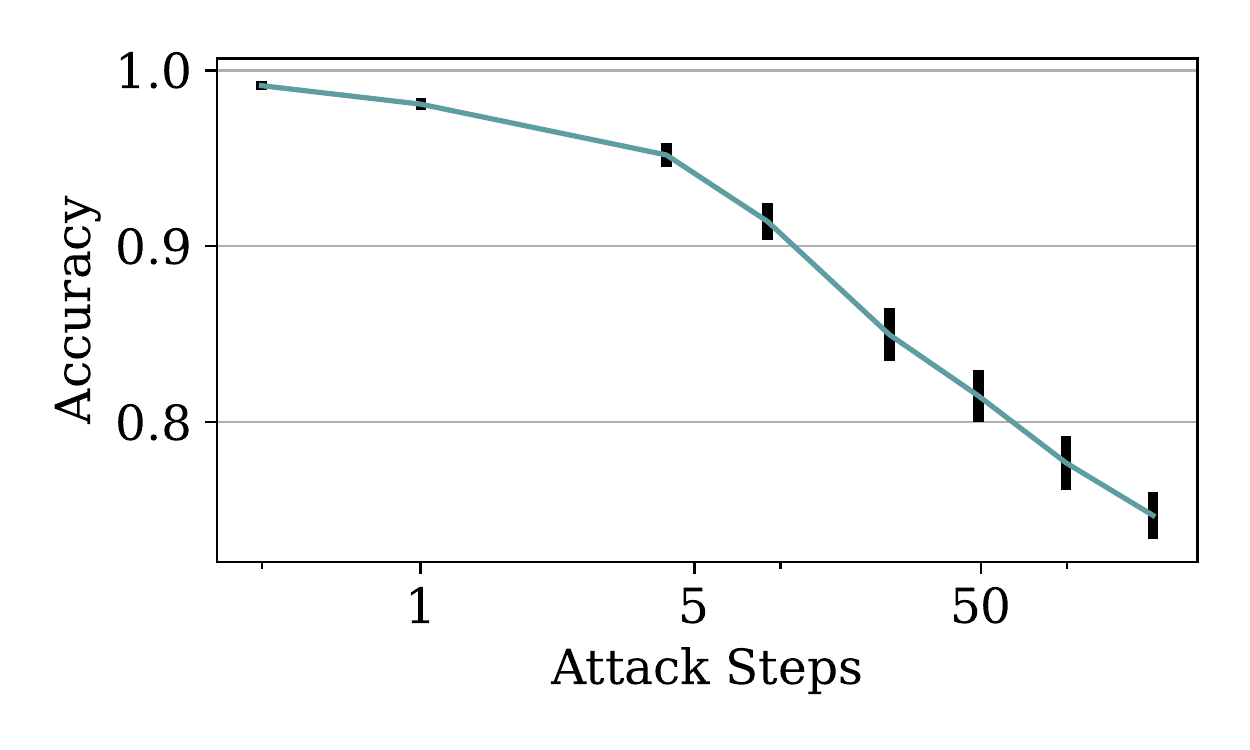}} \\
    \end{array}\)
    }
    \caption{Attacks on NeuroSAT with varying number of attack steps $s$ \label{fig:appendix_sat_steps}}
\end{figure}

\begin{wraptable}[9]{r}{0.3\textwidth}
    \centering
    \vspace{-8pt}
    \caption{
   Number of random samples to match optimized loss
    }
    \begin{tabular}{lrr}
    \toprule
    \(\Delta =\) & 0.01  & 0.05 \\
    \midrule
    SAT & 1257 &  9532 \\
    DEL & 749 &   4854 \\
    \bottomrule
    \end{tabular}
    \label{tab:sat_sample_efficiency}
\end{wraptable}

In \autoref{tab:sat_sample_efficiency} we further compare the efficiency of the attacks to the random baseline. Random samples are drawn until the loss matches or exceeds that of the adversarially perturbed sample. We introduce a cut-off at 20000 samples and report the mean number of samples for the \textbf{SAT} and \textbf{DEL} attack on the budgets $\Delta=0.01$ and $\Delta=0.05$.

We can observe that the additional computational cost of generating adversarial samples is justified, given the random baseline performs comparably well only with several thousand random samples drawn. While the efficiency of the proposed approach in general is important to consider for practical relevance, it is not sufficient to only compare to the random baseline in terms of sample efficiency. There are several other aspects to consider, like additional computational and storage overhead for a larger, denser dataset or to what extent a larger dataset of random or adversarial examples can improve generalization.

\section{Qualitative Results SAT}\label{sec:appendix_sat_qualitative}

To illustrate the changes to SAT problems, we provide an example for a successful attack on a small and fairly simple problem (from SAT-3-10). While the model recognizes the problem below as satisfiable with 100\% confidence, the \textbf{SAT} attack perturbes it (modified clauses highlighted blue) such that the model votes 'satisfiable' with only 0.59\% confidence. \\

\small Clean SAT Problem: \\
 \((\lnot4\lor \lnot3\lor 1\lor 2) \land (1\lor 2\lor 3\lor 4) \land (\lnot4\lor \lnot3\lor \lnot2\lor \lnot1) \land (\lnot3\lor 2) \land ((\lnot2\lor \lnot1\lor 3\lor 4) \land \textcolor{blue}{(\lnot3\lor 1\lor 2\lor 4)} \land \\ (\lnot4\lor \lnot2\lor 1\lor 3) \land (\lnot4\lor \lnot3\lor \lnot2\lor \lnot1) \land ((\lnot4\lor \lnot2\lor 1\lor 3) \land \textcolor{blue}{(\lnot2\lor 1\lor 4)} \land (\lnot4\lor \lnot2\lor 1\lor 3) \land \\ (\lnot3\lor \lnot2\lor \lnot1\lor 4) \land (\lnot4\lor \lnot1) \land ((\lnot2\lor \lnot1\lor 3\lor 4) \land \textcolor{blue}{(\lnot3\lor \lnot2\lor \lnot1)} \land (1\lor 2\lor 3\lor 4) \land (1\lor 2\lor 3\lor 4) \land \\ (\lnot4\lor \lnot2\lor 1) \land (\lnot4\lor \lnot3\lor \lnot2\lor \lnot1) \land (1\lor 2\lor 3\lor 4) \land (\lnot3\lor \lnot1\lor 2\lor 4) \land ((\lnot4\lor \lnot3\lor \lnot1\lor 2) \land \textcolor{blue}{(\lnot4\lor 1\lor 3)} \land (\lnot4\lor \lnot2\lor 3) \land (\lnot2\lor \lnot1\lor 3\lor 4) \land (\lnot3\lor \lnot1\lor 2\lor 4) \land (\lnot4\lor \lnot1\lor 3) \land \\ (\lnot4\lor \lnot2\lor \lnot1\lor 3) \land (\lnot3\lor \lnot2\lor 1) \land (1\lor 2\lor 3\lor 4) \land (\lnot4\lor \lnot3\lor 2) \land (\lnot3\lor \lnot1\lor 4) \land (\lnot3\lor \lnot1)\)

Perturbed SAT Problem: \\
\((\lnot4\lor \lnot3\lor 1\lor 2) \land (1\lor 2\lor 3\lor 4) \land (\lnot4\lor \lnot3\lor \lnot2\lor \lnot1) \land (\lnot3\lor 2) \land (\lnot2\lor \lnot1\lor 3\lor 4) \land \textcolor{blue}{(\lnot3\lor 2\lor 4)} \land \\ (\lnot4\lor \lnot2\lor 1\lor 3) \land (\lnot4\lor \lnot3\lor \lnot2\lor \lnot1) \land (\lnot4\lor \lnot2\lor 1\lor 3) \land \textcolor{blue}{(\lnot2\lor 4)} \land (\lnot4\lor \lnot2\lor 1\lor 3) \land  \\ (\lnot3\lor \lnot2\lor \lnot1\lor 4) \land (\lnot4\lor \lnot1) \land (\lnot2\lor \lnot1\lor 3\lor 4) \land \textcolor{blue}{(\lnot3\lor \lnot1)} \land (1\lor 2\lor 3\lor 4) \land (1\lor 2\lor 3\lor 4) \land \\ (\lnot4\lor \lnot2\lor 1) \land (\lnot4\lor \lnot3\lor \lnot2\lor \lnot1) \land (1\lor 2\lor 3\lor 4) \land (\lnot3\lor \lnot1\lor 2\lor 4) \land (\lnot4\lor \lnot3\lor \lnot1\lor 2) \land \textcolor{blue}{(\lnot4\lor 3)} \land (\lnot4\lor \lnot2\lor 3) \land (\lnot2\lor \lnot1\lor 3\lor 4) \land (\lnot3\lor \lnot1\lor 2\lor 4) \land (\lnot4\lor \lnot1\lor 3) \land \\ (\lnot4\lor \lnot2\lor \lnot1\lor 3) \land (\lnot3\lor \lnot2\lor 1) \land (1\lor 2\lor 3\lor 4) \land (\lnot4\lor \lnot3\lor 2) \land (\lnot3\lor \lnot1\lor 4) \land (\lnot3\lor \lnot1)\)

\section{Qualitative Results TSP}\label{sec:appendix_tsp_qualitative}

In this section we complement the figures \autoref{fig:dtsp_qualitative} and \autoref{fig:ctsp_qualitative} with further examples. In \autoref{fig:appendix_dtsp_qualitative}, we plot further examples for the attack on DTSP and further examples for ConvTSP in \autoref{fig:appendix_ctsp_qualitative}.

\begin{figure}[H]
    \centering
    \includegraphics[width=0.5\textwidth]{assets/legend_dtsp.pdf}
    \resizebox{0.8\linewidth}{!}{
        \(\begin{array}{ccc}
        \subfloat[]{\includegraphics[width=0.33\linewidth]{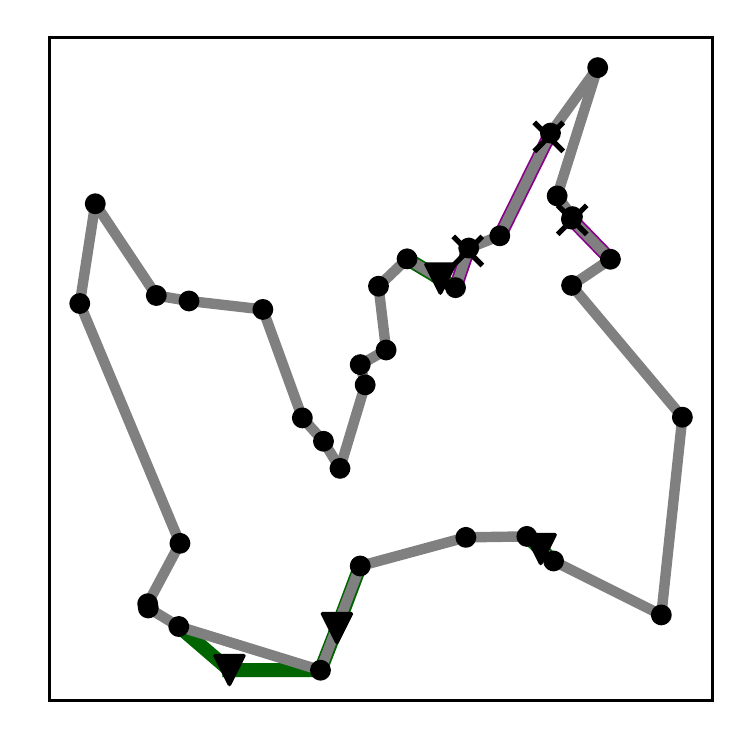}} &
        \subfloat[]{\includegraphics[width=0.33\linewidth]{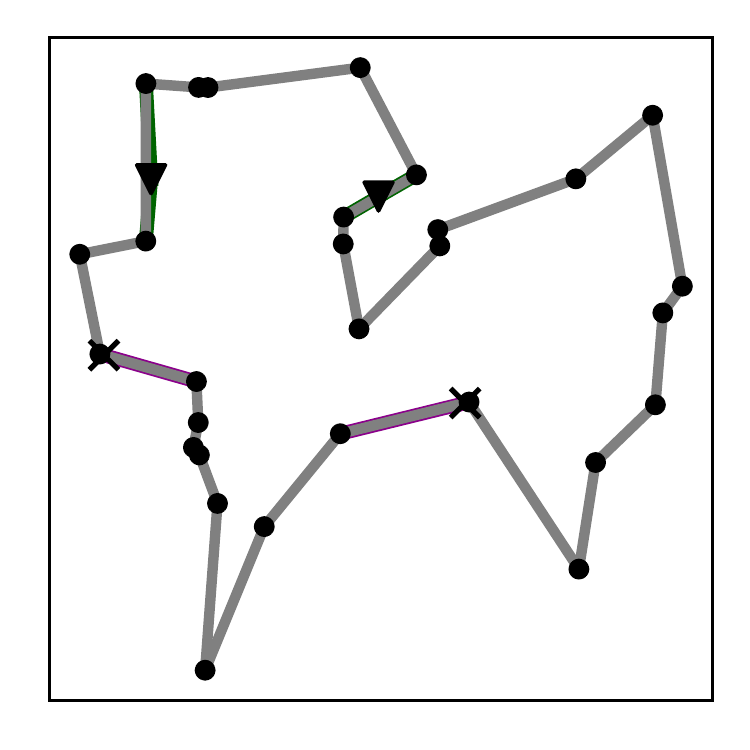}} &
        \subfloat[]{\includegraphics[width=0.33\linewidth]{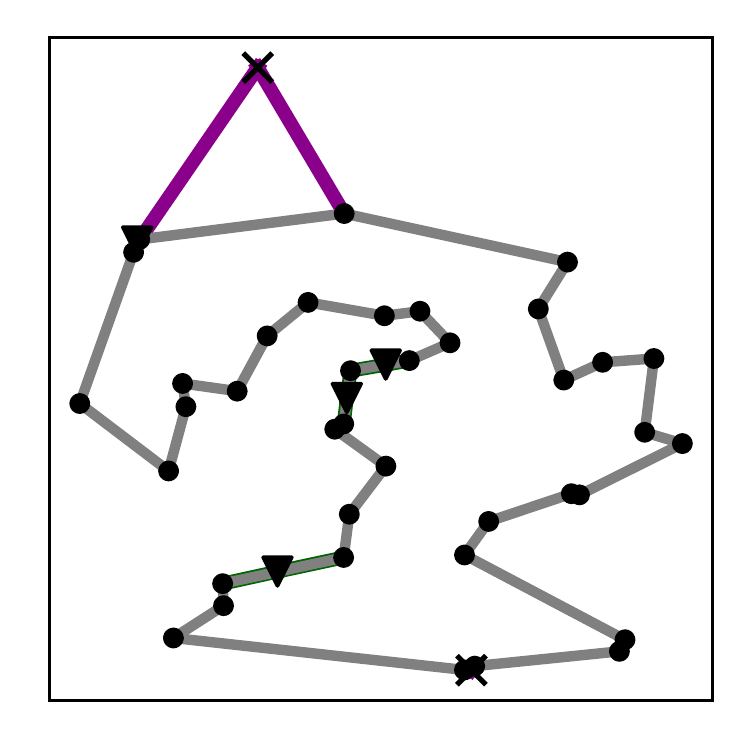}} \\
        \end{array}\)
    }
    \caption{Examples of the optimal route \(\solution\) and perturbed routes \(\pertsolution\) for DecisionTSP. \label{fig:appendix_dtsp_qualitative}}
\end{figure}

\begin{figure}[H]
    \centering
    \includegraphics[width=0.65\textwidth]{assets/legend_ctsp_oneline.pdf}
    \resizebox{0.9\linewidth}{!}{
        \(\begin{array}{ccc}
        \subfloat[]{\includegraphics[width=0.33\linewidth]{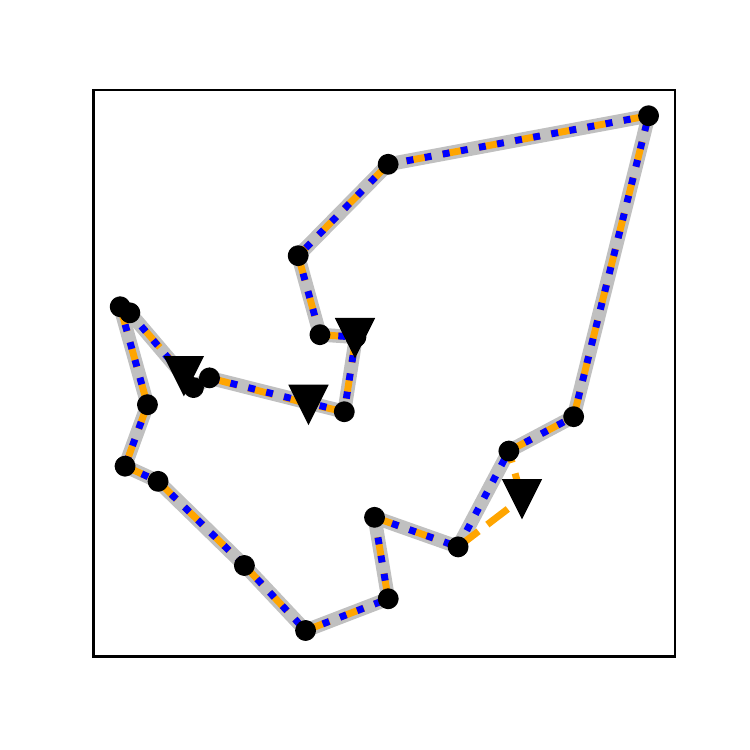}} &
        \subfloat[]{\includegraphics[width=0.33\linewidth]{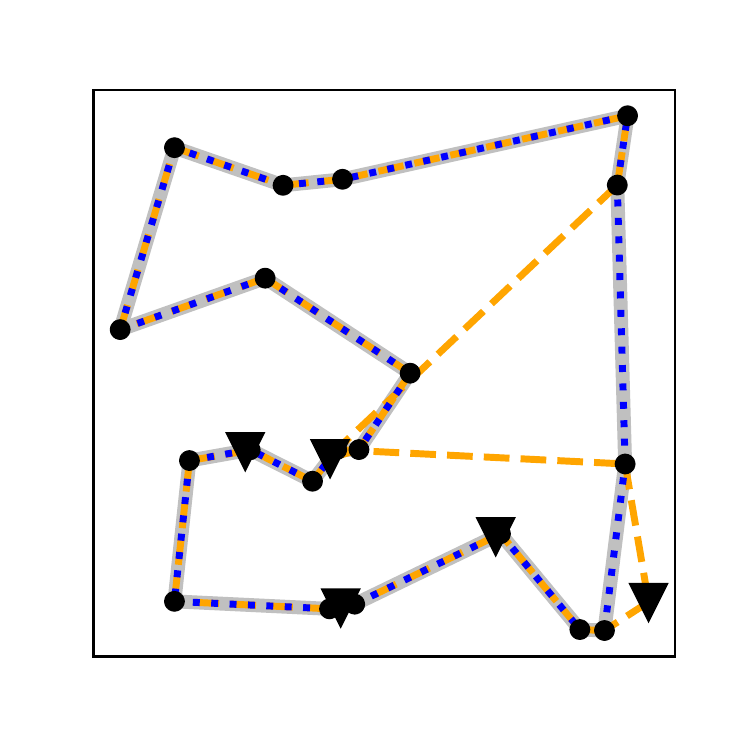}} &
        \subfloat[]{\includegraphics[width=0.33\linewidth]{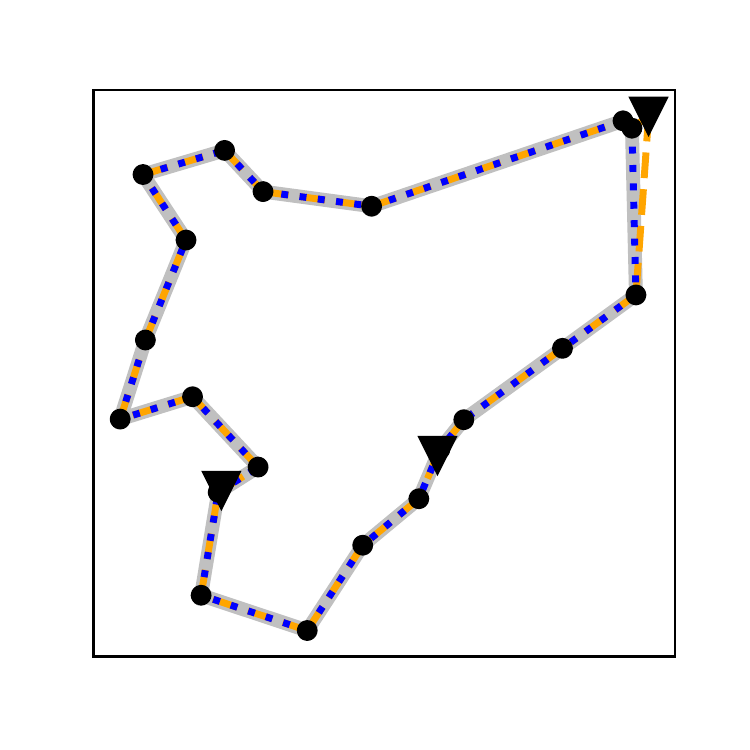}} \\
        \subfloat[]{\includegraphics[width=0.33\linewidth]{assets/conv/1633469024.3907592.pdf}} &
        \subfloat[]{\includegraphics[width=0.33\linewidth]{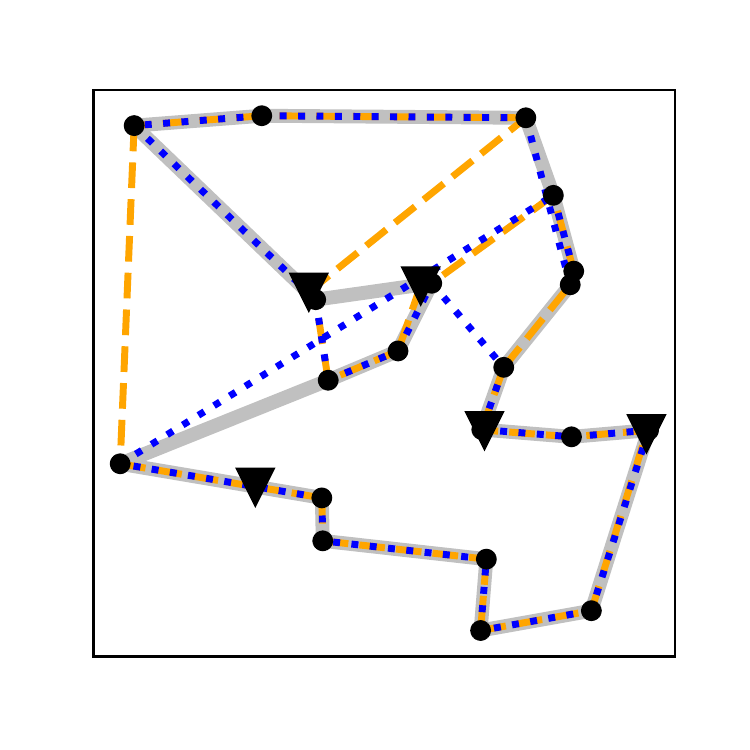}} &
        \subfloat[]{\includegraphics[width=0.33\linewidth]{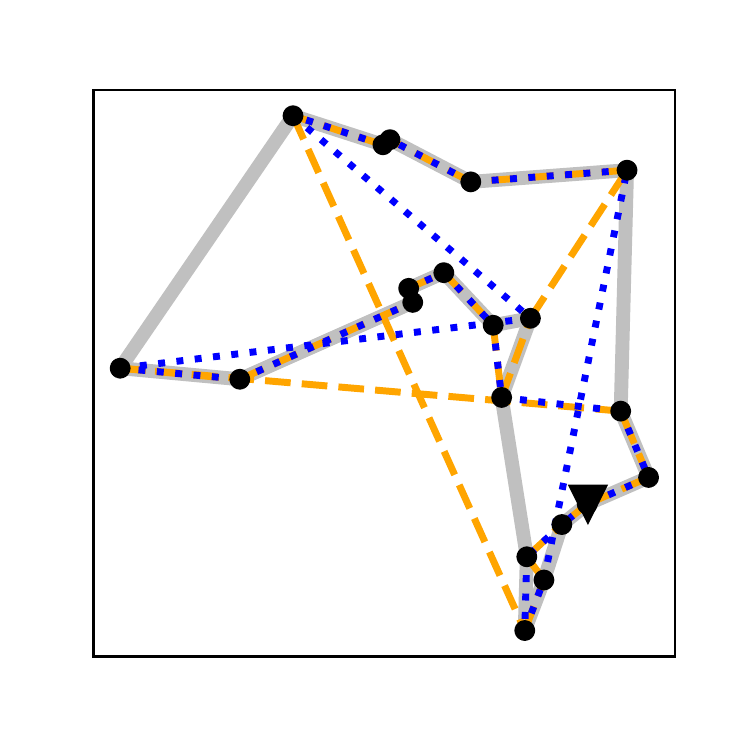}} \\
        \end{array}\)
    }
    \caption{Exemplary problem instances where the attack successfully changed the optimal route for ConvTSP that show drastic changes of the prediction. \label{fig:appendix_ctsp_qualitative}}
\end{figure}

\end{document}